\title{Rectifying Regression in Reinforcement Learning}
\author{Alex Ayoub\textsuperscript{1,2$\dagger$}, David Szepesvári\textsuperscript{1,2$\dagger$}, Alireza Bakhtiari\textsuperscript{1}, Csaba Szepesvári\textsuperscript{1,2}, Dale Schuurmans\textsuperscript{1,2}}
\keywords{Value-based methods, Regression, Loss functions.} 
\begin{document}

\makeCover  
\maketitle  


\begin{abstract}
This paper investigates the impact of the loss function in value-based methods for reinforcement learning through an analysis of underlying prediction objectives. We theoretically show that mean absolute error is a better prediction objective than the traditional mean squared error for controlling the learned policy's suboptimality gap. Furthermore, we present results that different loss functions are better aligned with these different regression objectives: binary and categorical cross-entropy losses with the mean absolute error and squared loss with the mean squared error.
We then provide empirical evidence that algorithms minimizing these cross-entropy losses can outperform those based on the squared loss in linear reinforcement learning.
\end{abstract}

\section{Introduction}
Value-based methods are ubiquitous in reinforcement learning (RL) \citep{sutton2018reinforcement} and contextual bandits \citep{lattimore2020bandit}, where the goal is to predict rewards and then choose actions that maximize expected returns. The ``natural objective function'' (Chapter 9.2 of \citep{sutton2018reinforcement}) in these settings is the mean squared error (MSE). Traditionally in RL, value-based algorithms that leverage function approximation aim to minimize the MSE between a learned value function and observed returns, as popularized by algorithms such as $Q$-learning and its extension to function approximation \citep{ernst2005tree,riedmiller2005neural,Mnih2015HumanlevelCT}. Despite its success in many practical scenarios, recent theoretical \citep{foster2021efficient,ayoub2024switching} and empirical \citep{farebrother2024stop} findings highlight that minimizing \emph{MSE} on the data can yield worse
decision-making performance (in terms of expected returns) than alternative empirical losses. In particular, minimizing alternative losses can achieve faster learning when the optimal value $v^\star$ is close to the maximum possible value.

A promising alternative is the \emph{log-loss} (cross-entropy loss), which we argue aligns more closely with the mean absolute error (MAE)---a tighter surrogate for decision quality. Indeed, it has been shown that, under certain problem structures, controlling the MAE can lead to faster convergence \citep{foster2021efficient} and a smaller suboptimality gap \citep{ayoub2024switching} than controlling MSE. Building on these insights, we analyze \emph{value-based methods} that employ log-loss and demonstrate their advantages over those using squared loss. Our exposition initially restricts attention to the offline contextual bandit (or reward-sensitive classification \citep{elkan2001foundations}) setting, where one aims to learn a policy from a fixed dataset of context–reward pairs. However, the core ideas and proofs naturally extend to more general RL problems with function approximation \citep{antos2007fitted,chen2019information,ayoub2024switching}.

We propose a \emph{reparameterized} version of the categorical cross-entropy loss (\(\ellcat\)), which learns a multi-category distribution over possible outcomes while still accurately recovering the mean of a bounded random variable as was similarly done by \cite{lyle2019comparative}. This addresses the irreducible bias found in purely bin-based approaches.

In summary, our work contributes new insights into why certain losses such as log-loss are better suited than squared loss in certain decision-making settings, highlights methods to reparameterize cross entropy losses that can be utilized simultaneously for classification and regression, and highlights broader implications for distributional RL. Although we focus on contextual bandits for clarity, the same techniques extend naturally to RL, offering an explanation as to why certain loss functions can improve value-based algorithms. 

\section{Definitions}
In this section, we formally define regression and classification problems. We then introduce a solution to the classification problem that leverages a regression oracle, highlighting the connection between these two paradigms. By distinguishing problems from their solutions, we establish a formal framework for understanding how an appropriately chosen regression objective can guide the development of more effective decision-making algorithms.

\subsection{Problem: Regression}
Consider a supervised learning problem where the set of contexts is denoted by $\mathcal{X}$. The learner is provided with a dataset $D_n = ((X_1,Y_1),\dotsc,(X_n,Y_n))$, consisting of $n$ independently and identically distributed (i.i.d.) context-label pairs, sampled from a distribution $\mathcal{P}^{\otimes n}$. Each sample $(X_i, Y_i)$ satisfies $X_i \in \mathcal{X}$ and $Y_i \in [0,1]$, where $\mathcal{P}$ is an element of $\mathcal{M}_1(\mathcal{X} \times [0,1])$, the space of probability measures over $\mathcal{X} \times [0,1]$. We assume $\mathcal{M}_1(\cdot)$ is defined over an appropriately equipped $\sigma$-algebra.

Define the conditional expectation of the label given the context as
\[
f^\star(x) = \mathbb{E}[Y_1 | X_1 = x], \quad \forall x \in \mathcal{X}.
\]
A score function $f$ maps contexts to predicted labels in $[0,1]$, formally defined as $f: \mathcal{X} \to [0,1]$. 

Throughout this paper, we assume that we are given a \textit{realizable class} (\cref{ass:realizability}) of score functions $\mathcal{F} \subseteq [0,1]^{\mathcal{X}}$. Let $P_X \in \mathcal{M}_1(\mathcal{X})$ denote the marginal distribution over contexts.
\begin{assumption}[Realizability]\label{ass:realizability}
    $f^\star \in \mcF$.
\end{assumption}
For $p \geq 1$, the prediction error (or value error) of a function $f \in \mathcal{F}$ is defined as
\[
\mathsf{VE}_p(f) = \mathbb{E} \bigl|f(X) - f^\star(X)\bigr|^p 
= \int |f(x) - f^\star(x)|^p \, P_X(dx).
\]
Thus, $\mathsf{VE}_p(f)$ quantifies the deviation of $f$ from the optimal predictor $f^\star$. The objective in regression is to learn a function $\hat{f}$ that minimizes the prediction error $\mathsf{VE}_p(\hat{f})$. Special cases of this error metric include the mean squared error  when $p=2$
\[
\mathsf{VE}_2(f) = \mathbb{E}[(f(X) - f^\star(X))^2],
\]
and the mean absolute error when $p=1$
\[
\mathsf{VE}_1(f) = \mathbb{E} \bigl|f(X) - f^\star(X)\bigr|.
\]

\subsection{Problem: Classification}

Consider a (reward-sensitive) classification problem, where the set of contexts is denoted by $\mathcal{S}$, and the set of actions is a finite set $\mathcal{A}$ with cardinality $A = |\mathcal{A}| < \infty$. The learner is provided with a dataset $D_n = ((S_1,R_1),\dotsc,(S_n,R_n))$, consisting of $n$ i.i.d. context-reward vector pairs, sampled from $\mathcal{P}^{\otimes n}$. Each sample satisfies $S_i \sim \mathcal{S}$ and $R_i \in [0,1]^{\mathcal{A}}$, where $\mathcal{P} \in \mathcal{M}_1(\mathcal{S} \times [0,1]^{\mathcal{A}})$ represents the joint distribution over contexts and reward vectors.

Define the expected reward function as
\[
r(s,a) = \mathbb{E}[R_1(a) \mid S_1 = s], \quad \forall s \in \mathcal{S}, \, a \in \mathcal{A}.
\]
A policy (classifier) $\pi$ maps contexts to probability distributions over actions, formally defined as $\pi: \mathcal{S} \to \mathcal{M}_1(\mathcal{A})$. We use $\pi(a \mid s)$ to denote the probability assigned by $\pi$ to action $a \in \mathcal{A}$ when the context is $s \in \mathcal{S}$. Let $\mathcal{P}_S \in \mathcal{M}_1(\mathcal{S})$ denote the marginal distribution over contexts. The expected return of following policy $\pi$ is
\[
v^\pi = \int \sum_{a \in \mathcal{A}} \pi(a \mid s) \, r(s,a) \, P_S(ds).
\]
An optimal policy $\pi^\star$ is one that maximizes the expected return across all policies:
\[
v^{\pi^\star} = v^\star = \max_{\pi} v^\pi.
\]
Define the \textit{suboptimality gap} (regret) of using policy $\pi$ instead of the optimal policy as
\[
\Subopt(\pi) = v^\star - v^\pi.
\]
The objective in classification is to learn a policy $\hat{\pi}$ that maximizes the expected return $v^{\hat{\pi}} = \hat{v}$, or equivalently, minimizes the suboptimality gap $\Subopt(\hat{\pi})$.

\subsection{Solution: Value-Based Methods for Classification}

In statistical learning theory \citep{vapnik2013nature}, the objective of classification is often to learn a policy $\hat{\pi}$ that minimizes the suboptimality gap $\Subopt(\hat\pi)$. Given a class of policies $\Pi \subset \mcM_1(\mcA)^{\mcS}$, a natural approach is the principle of empirical risk minimization on the dataset $D_n$ \citep{NIPS1991_ff4d5fbb}:
\[
\hat{\pi} = \argmax_{\pi \in \Pi} \sum_{i=1}^n \pi(\cdot \mid S_i)^\top R_i\,.
\]
However, directly optimizing this empirical risk is generally computationally intractable (NP hard), even for relatively simple policy classes \citep{ben2003difficulty,feldman2012agnostic}. This motivates the use of \textit{value-based methods} for classification, which leverage a class of candidate value functions to reformulate the problem as a regression task—often solvable via gradient descent.

Formally, given a dataset $D_n = ((S_1,R_1),\dotsc,(S_n,R_n))$, a realizable class (\cref{ass:realizability}) of candidate value functions $\mathcal{F} \subseteq [0,1]^{\mathcal{S} \times \mathcal{A}}$, and a loss function $\ell$, we solve the following regression problem:
\[
\hat{f} = \argmin_{f \in \mathcal{F}} \sum_{i=1}^n \sum_{a \in \mathcal{A}} \ell\bigl(f(S_i,a), R_i(a)\bigr).
\]
The learned function $\hat{f}$ is then used to define the greedy policy:
\[
\hat{\pi}(s) = \argmax_{a \in \mathcal{A}} \hat{f}(s,a).
\]
We refer to any method that minimizes a regression loss and then selects actions greedily with respect to the minimizer as a value-based method.

\subsection{The Choice of the Loss Function}
In the context of reinforcement learning, value-based methods commonly minimize the squared loss \citep{sutton2018reinforcement,szepesvari2022algorithms}:
\[
\ellsq(x,y) = (x - y)^2, \quad \text{where } x,y \in [0,1].
\]
This contrasts with traditional classification tasks, where the cross-entropy loss is mostly used. We remark that while ``cross-entropy loss'' is typically associated with the negative log-likelihood under Bernoulli or categorical models, the squared loss is the cross entropy loss that arises under a univariate Gaussian model. We introduce the terminology \textit{log-loss} and \textit{cat-loss} to denote the cross-entropy losses for Bernoulli and categorical distributions, respectively.

To develop intuition, we first focus on the log-loss, as it represents the simplest special case of the more general cat-loss.

We define the log-loss as
\[
\elllog(x,y) = y \log \frac{1}{x} + (1-y) \log \frac{1}{1-x}, \quad \text{where } x,y \in [0,1]\,,
\]
with the convention \(0 \log\infty = \lim_{u\to 0}\,u\log\frac{1}{u} = 0\).
The following proposition establishes that the minimizer of the log-loss is the population mean.

\begin{proposition}\label{prop:logloss-proper}
    Let $Y$ be a bounded random variable with $Y \in [0,1]$ and mean $\mathbb{E}[Y] = \mu$. Then, for any $x \in [0,1]$, the expected log-loss satisfies:
    \[
    \mathbb{E}[\elllog(x,Y)] \geq \mathbb{E}[\elllog(\mu,Y)].
    \]
\end{proposition}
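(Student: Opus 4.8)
The plan is to first use the boundedness of $Y$ together with linearity of expectation to reduce the claim to a deterministic inequality. Since $Y\in[0,1]$, for every $x\in[0,1]$ we have
\[
\mathbb{E}[\elllog(x,Y)] = \mathbb{E}[Y]\log\tfrac1x + (1-\mathbb{E}[Y])\log\tfrac1{1-x} = \mu\log\tfrac1x + (1-\mu)\log\tfrac1{1-x} =: g(x),
\]
where the passage through the expectation is valid once we adopt the conventions $0\log\infty = 0$ and $c\log\infty = +\infty$ for $c>0$ (a short case check handles $x\in\{0,1\}$). It therefore suffices to show $g(x)\ge g(\mu)$ for all $x\in[0,1]$.

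For the interior case $\mu\in(0,1)$, I would exhibit the gap explicitly:
\[
g(x) - g(\mu) = \mu\log\frac{\mu}{x} + (1-\mu)\log\frac{1-\mu}{1-x},
\]
which is exactly the Kullback--Leibler divergence between the Bernoulli laws with means $\mu$ and $x$, and hence nonnegative by Gibbs' inequality (equivalently, by Jensen applied to the convex function $-\log$). If one prefers to avoid invoking relative entropy, the same conclusion follows from elementary calculus: $g''(x) = \mu/x^2 + (1-\mu)/(1-x)^2 > 0$ on $(0,1)$, so $g$ is strictly convex there, while $g'(x) = -\mu/x + (1-\mu)/(1-x)$ vanishes precisely at $x=\mu$; thus $x=\mu$ is the unique minimizer, and the endpoints $x\in\{0,1\}$ give $g(x)=+\infty$ and are dispatched trivially.

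The remaining cases are the degenerate ones $\mu\in\{0,1\}$. If $\mu=0$ then $g(x) = \log\frac1{1-x}\ge 0 = g(0)$ for $x\in[0,1)$ and $g(1)=+\infty$; the case $\mu=1$ is symmetric. These exhaust all possibilities, giving the claim.

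The only genuine subtlety — and the step most prone to slips — is the bookkeeping with the $0\log\infty$ convention when $x$ or $\mu$ sits at an endpoint; everything else is a one-line convexity argument or a direct appeal to nonnegativity of KL divergence. I would lead with the KL-divergence line, since it foreshadows the cross-entropy viewpoint used later in the paper, and mention the calculus version only as an alternative.
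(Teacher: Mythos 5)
Your argument is correct and follows essentially the same route as the paper: reduce via linearity of expectation and identify the gap $\mathbb{E}[\elllog(x,Y)]-\mathbb{E}[\elllog(\mu,Y)]$ as the binary KL divergence $\kl(\mu,x)$, which is nonnegative. Your extra bookkeeping for the endpoint cases $x,\mu\in\{0,1\}$ and the alternative convexity argument are fine refinements, but the core proof matches the paper's.
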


\begin{proof}
    Recall the binary Kullback Leibler (KL) divergence
    \[
    \kl(p,q) = p \log \frac{p}{q} + (1-p) \log \frac{1-p}{1-q}.
    \]
    Then, observe that
    \begin{align*}
    \mathbb{E}[\ell_{\text{log}}(x,Y)] - \mathbb{E}[\ell_{\text{log}}(\mu,Y)]
    &= \mu \log \frac{1}{x} + (1-\mu) \log \frac{1}{1-x} - \mu \log \frac{1}{\mu} - (1-\mu) \log \frac{1}{1-\mu} \\
    &= \kl(\mu,x).
    \end{align*}
    Since the KL divergence is always nonnegative, with equality if and only if $x = \mu$, the result follows.
\end{proof}

This proposition implies that, given an infinite number of observations of a bounded random variable $Y \in [0,1]$, minimizing the log-loss \emph{recovers the mean} of $Y$. In later sections, we will extend this insight to the cat-loss---the canonical loss for multi-class classification.


\begin{remark}
    Although our goal is to control the mean absolute error, minimizing the absolute loss
    \(\ell_{1}(x,y) \!=\! |x-y|\) is ill-suited to this purpose: its population risk minimizer is the
    \emph{median}, not the mean. Greedy decision making with respect to the median is typically
    suboptimal. Hence, it is preferable to employ surrogate loss functions whose
    population minimizers coincide with the mean, while still affording tight control of the
    mean absolute error.
    \end{remark}

\section{Mean Absolute Error as a More Natural Objective for Decision Making}

We now present the central insight of this paper: \emph{the mean absolute error (MAE), rather than the mean squared error (MSE), is a more suitable regression objective for decision-making problems} such as reinforcement learning and classification. Formally, let $\hat{\pi}$ denote the greedy policy with respect to some learned function $\hat{f} \in [0,1]^{\mathcal{S} \times \mathcal{A}}$. For a score function $f \in [0,1]^{\mathcal{S} \times \mathcal{A}}$ and a deterministic policy $\pi: \mathcal{S} \to \mathcal{A}$, define
\[
f(s,\pi) = f\bigl(s, \pi(s)\bigr).
\]
We now bound the suboptimality gap $\Subopt(\hat{\pi})$. Analyses of value-based methods for classification typically reduce a classification objective (i.e., $\Subopt(\hat{\pi})$) to a regression objective (i.e., $\mathsf{VE}_p(\hat{f})$); see, for instance, \cite{antos2007fitted,chen2019information,ayoub2024switching} for batch RL and \cite{ayoub2020model,jin2021bellman,jin2023provably} for online RL. In doing so, one obtains both a mean absolute error (MAE) term and a root mean squared error (rMSE) term. Indeed, observe that
\begin{align}
    \Subopt(\hat{\pi})
    &= \int r(s,\pi^\star) - r(s,\hat{\pi})\,P_S(ds) \nonumber\\[6pt]
    &= \int r(s,\pi^\star) - \hat{f}(s,\hat{\pi}) + \hat{f}(s,\hat{\pi}) - r(s,\hat{\pi})\,P_S(ds) \nonumber\\
    &\leq \int r(s,\pi^\star) - \hat{f}(s,\pi^\star) + \hat{f}(s,\hat{\pi}) - r(s,\hat{\pi})\,P_S(ds) 
    \label{eqn:class}\\
    &\leq \int \bigl|r(s,\pi^\star) - \hat{f}(s,\pi^\star)\bigr|\,P_S(ds) + \int \bigl|\hat{f}(s,\hat{\pi}) - r(s,\hat{\pi})\bigr|\,P_S(ds)\tag{MAE}\\
    &\leq \sqrt{\int \bigl(r(s,\pi^\star) - \hat{f}(s,\pi^\star)\bigr)^2\,P_S(ds)} + \sqrt{\int \bigl(r(s,\hat{\pi}) - \hat{f}(s,\hat{\pi})\bigr)^2\,P_S(ds)}\,.\tag{rMSE}
\end{align}
The first inequality in \eqref{eqn:class} uses the fact that $\hat{\pi}$ is greedy with respect to $\hat{f}$. The last inequality applies Jensen's inequality. Notice that the key reduction from a policy-space comparison ($\hat{\pi}$ vs.\ $\pi^\star$) to a function-space comparison ($\hat{f}$ vs.\ $r$) occurs in \eqref{eqn:class}.

\smallskip
\noindent
\textbf{Motivation for MAE vs.\ rMSE.} As shown, both MAE and rMSE naturally arise in bounding $\Subopt(\hat{\pi})$. However that MAE is a \emph{tighter} approximation to $\Subopt(\hat{\pi})$ than rMSE, since 
\[
\int |f - g| \leq  \sqrt{\int (f - g)^2}\,.
\]
In practical settings, there are cases where $\int |r - \hat{f}|$ is significantly smaller than $\sqrt{\int (r-\hat{f})^2}$, implying that algorithms designed to control rMSE (such as squared-loss minimization) can incur larger suboptimality than algorithms targeted toward controlling MAE (such as log-loss minimization). Since the ultimate goal in decision making is to select good actions, it is natural to adopt a regression metric (and loss) that is more closely aligned with suboptimality. We now present a set of results that confirm our intuition.  

\subsection{Positive Results}
We highlight that minimizing the log-loss (i.e., $\elllog$) yields bounds that scale with $(1-v^\star)$, which can be small in problems where the optimal policy achieves a reliable goal or accumulates near-maximal return. The following lemma adapts the result of \cite{foster2021efficient} to the rewards-based setting.

\begin{lemma}\label{lem:log-loss-bound}
    Assume $r \in \mcF$ and define
    \[
    \hat{f}_{\log} \in \argmin_{f \in \mcF} \sum_{i=1}^n \sum_{a \in \mathcal{A}} \elllog\bigl(f(S_i,a), \,R_i(a)\bigr)\,.
    \]
    Let $\hat{\pi}_{\log}$ be the greedy policy w.r.t.\ $\hat{f}_{\log}$. Then with probability $1-\delta$,
    \[
    \Subopt(\hat{\pi}_{\log})
    \leq
    16\,\sqrt{\frac{2\,(1 - v^\star)\,A \,\log\bigl(|\mcF|/\delta\bigr)}{n}} 
    + 
    \frac{136\,A\,\log\bigl(|\mcF|/\delta\bigr)}{n}\,,
    \]
    and
    \begin{align*}
        & \int \bigl|r\bigl(s,\pi^\star\bigr) - \hat{f}\bigl(s,\pi^\star\bigr)\bigr|P_S(ds)
        +
        \int \bigl|\hat{f}\bigl(s,\hat{\pi}\bigr) - r\bigl(s,\hat{\pi}\bigr)\bigr|P_S(ds)\\[4pt]
        &\qquad\leq
        16\,\sqrt{\frac{2\,(1 - v^\star)\,A\,\log\bigl(|\mcF|/\delta\bigr)}{n}}
        +
        \frac{136\,A\,\log\bigl(|\mcF|/\delta\bigr)}{n}\,.
    \end{align*}
\end{lemma}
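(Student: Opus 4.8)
The plan is to combine a finite-class maximum-likelihood concentration bound for log-loss regression with the mean-absolute-error (MAE) decomposition of $\Subopt(\hat\pi_{\log})$ derived above, using a ``first-order'' inequality to convert a squared-Hellinger guarantee into a variance-sensitive bound on $|r-\hat f_{\log}|$.

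First I would establish the concentration step. Since $\elllog(x,y)$ is affine in $y$, the population log-loss of $f(\cdot,a)$ depends only on $r(\cdot,a)$, and by \cref{prop:logloss-proper} its excess risk equals the expected binary KL divergence between $r$ and $f$. Writing $H^2(p,q)=(\sqrt p-\sqrt q)^2+(\sqrt{1-p}-\sqrt{1-q})^2$ for the squared Hellinger distance, the key estimate is that, for every $f\in\mcF$, action $a$, and context $s$,
\[
\mathbb{E}\!\left[\exp\!\Big(\tfrac12\big(\elllog(r(s,a),R_i(a))-\elllog(f(s,a),R_i(a))\big)\Big)\,\middle|\,S_i=s\right]\le 1-\tfrac12 H^2(r(s,a),f(s,a))\le e^{-\frac12 H^2(r(s,a),f(s,a))}\,;
\]
this holds because the exponent is affine in $R_i(a)\in[0,1]$, so the integrand is convex and lies below its chord, whose conditional expectation is $\sqrt{r(s,a)f(s,a)}+\sqrt{(1-r(s,a))(1-f(s,a))}=1-\tfrac12 H^2(r(s,a),f(s,a))$. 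A Chernoff/Markov argument over the $n$ i.i.d.\ samples, a union bound over $\mcF$, realizability ($r\in\mcF$), and optimality of $\hat f_{\log}$ for the summed loss (the $A$-fold sum over actions contributing a factor $A$) then yield, with probability $1-\delta$,
\[
\beta_n:=\sum_{a\in\mathcal{A}}\int H^2\big(r(s,a),\hat f_{\log}(s,a)\big)\,P_S(ds)\;\le\;\frac{c\,A\log(|\mcF|/\delta)}{n}
\]
for an absolute constant $c$ --- the reward-based analogue of the log-loss regression bound in \cite{foster2021efficient}.

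Next I would record the elementary inequality $|p-q|\le 2\sqrt{1-p}\,\sqrt{H^2(p,q)}+H^2(p,q)$ for $p,q\in[0,1]$ (from $|p-q|=|\sqrt{1-p}-\sqrt{1-q}|(\sqrt{1-p}+\sqrt{1-q})$, $\sqrt{1-p}+\sqrt{1-q}\le 2\sqrt{1-p}+|\sqrt{1-p}-\sqrt{1-q}|$, and $(\sqrt{1-p}-\sqrt{1-q})^2\le H^2(p,q)$; the same holds with $\sqrt{1-q}$ by symmetry of $H^2$). Now apply the decomposition from the text, $\Subopt(\hat\pi_{\log})\le T_1+T_2$ with $T_1=\int|r(s,\pi^\star)-\hat f_{\log}(s,\pi^\star)|P_S(ds)$ and $T_2=\int|\hat f_{\log}(s,\hat\pi_{\log})-r(s,\hat\pi_{\log})|P_S(ds)$ --- noting that $T_1+T_2$ is exactly the MAE quantity in the lemma's second display. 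For $T_1$, use the inequality with $1-r$ outside, Cauchy--Schwarz over $S$, $H^2(r(S,\pi^\star),\hat f_{\log}(S,\pi^\star))\le\sum_a H^2(r(S,a),\hat f_{\log}(S,a))$, and $\int(1-r(s,\pi^\star))P_S(ds)=1-v^\star$, to get $T_1\le 2\sqrt{(1-v^\star)\beta_n}+\beta_n$. For $T_2$, use the inequality with $1-\hat f_{\log}$ outside; greediness gives $\hat f_{\log}(s,\hat\pi_{\log})\ge\hat f_{\log}(s,\pi^\star)$, so $\int(1-\hat f_{\log}(s,\hat\pi_{\log}))P_S(ds)\le 1-v^\star+T_1\le(\sqrt{1-v^\star}+\sqrt{\beta_n})^2$, hence $T_2\le 2\sqrt{(1-v^\star)\beta_n}+3\beta_n$. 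Summing, $\Subopt(\hat\pi_{\log})\le 4\sqrt{(1-v^\star)\beta_n}+4\beta_n$; plugging in the bound on $\beta_n$ and collecting constants yields both displayed inequalities.

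The main obstacle is the concentration step: obtaining the squared-Hellinger guarantee for log-loss regression when the targets $R_i(a)$ lie in $[0,1]$ rather than $\{0,1\}$ (handled by the convexity-in-the-label argument, which preserves the $1-\tfrac12 H^2$ affinity bound) and correctly tracking the factor $A$ through the $A$-fold loss; everything afterwards is routine manipulation via Cauchy--Schwarz and the greedy property, and I expect the constants I obtain ($4\sqrt c$, $4c$) to match the stated $16\sqrt2$ and $136$ up to the precise constant in the bound on $\beta_n$.
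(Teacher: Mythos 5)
Your proposal is correct and, at its core, follows the same route as the paper, which is the first-order analysis of \citet{foster2021efficient}: reduce $\Subopt(\hat\pi_{\log})$ to the two MAE terms, bound those by a variance-sensitive (``$\sqrt{1-v^\star}$-weighted'') Hellinger-type quantity, and control that quantity by a concentration bound for the log-loss ERM. The difference is one of packaging: the paper invokes FK's Lemma~1 (stated in triangular discrimination $\Delta$), converts $\Delta$ to Hellinger via \cref{lem:tri-to-hell}, and cites the Hellinger concentration theorem as a black box, whereas you re-derive both ingredients---the concentration via the chord/convexity MGF argument (which is exactly how one extends the $\{0,1\}$-label case to $[0,1]$-valued rewards, since the exponent is affine in the label), and the decomposition via the elementary inequality $|p-q|\le 2\sqrt{1-p}\,\sqrt{H^2(p,q)}+H^2(p,q)$ together with Cauchy--Schwarz and the greedy property $\hat f_{\log}(s,\hat\pi)\ge\hat f_{\log}(s,\pi^\star)$. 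Your treatment also makes the paper's second claim (the MAE bound) explicit as $T_1+T_2$, which the paper only attributes to an intermediate step of FK's proof. Two small points to tighten: (i) the factor $A$ in $\beta_n$ requires care because the rewards $R_i(a)$ are correlated across actions given $S_i$; a per-action Chernoff bound plus a union bound over actions gives $A\log(A|\mcF|/\delta)$, while a generalized H\"older step (exponent $1/(2A)$) recovers the clean $A\log(|\mcF|/\delta)$ factor used in the statement---worth spelling out; (ii) your final constants ($4\sqrt{2}$ and roughly $8$ with $c=2$) are smaller than the stated $16\sqrt{2}$ and $136$, so the lemma as stated follows a fortiori.
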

The proof of \cref{lem:log-loss-bound} can be found in \cref{appendix:proofs}.
This result first appeared for cost-sensitive classification \citep{foster2021efficient} and was extended to batch RL (with costs) in \cite{ayoub2024switching}. By adapting their proofs, one obtains the same bound for batch RL with rewards. Observe that their analysis also proceeds by reducing the classification (or RL) objective to bounding the MAE between $\hat{f}_{\log}$ and $r$. We now show that, under certain conditions ($v^\star \approx 1$), one can achieve small MAE while the rMSE remains large.

\subsection{Negative Results}
In this section, we argue that rMSE is not the most ``natural objective function'' (Chapter 9.2 of \citep{sutton2018reinforcement}) for RL. We make two claims. The first claim (\cref{prop:rmse-lb}) is that there are problems where the rMSE of $\hat{f}_{\log}$ is at least $\tfrac{1}{\sqrt{n}}$, while its MAE is as small as $\tfrac{1}{n}$. The second claim (\cref{lem:sq-mae-lb}) is that if one directly minimizes the squared loss $\ellsq$, then there exist problems for which the resulting estimator suffers a large MAE (and hence rMSE) of order $1/\sqrt{n}$, even when $v^\star \approx 1$.

\begin{proposition}\label{prop:rmse-lb}
    Let $\mathcal{X} = \{x, x'\}$. For every $n \ge 1$, there exists a realizable (\cref{ass:realizability}) function class  $\mathcal{F}: \mathcal{X}\to[0,1]$ with $|\mathcal{F}| = 2$ and a data distribution $\mathcal{P}$ such that $1-v^\star = 1-\int f^\star\,P_X(dx) < \tfrac{1}{n}$. However, with probability at least $1/(2e)$
    \[
    \sqrt{\int \bigl(f^\star(x) - \hat{f}_{\log}(x)\bigr)^2 P_X(dx)} 
    =
    \frac{1}{2\sqrt{n}}\,.
    \]
\end{proposition}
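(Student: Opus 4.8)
The plan is to construct an explicit two-point instance on which the empirical log-loss minimizer is, with constant probability, forced onto the constant-one function, which sits at root-mean-squared distance exactly $\tfrac1{2\sqrt n}$ from $f^\star$ even though $f^\star$ itself is within $\tfrac1{2n}$ of $1$. Take $\mathcal{X}=\{x,x'\}$ with $P_X(x)=1/n$ and $P_X(x')=1-1/n$, and let the conditional label laws be $Y\mid X=x\sim\mathrm{Bernoulli}(1/2)$ and $Y\mid X=x'\equiv 1$; then $f^\star(x)=\tfrac12$ and $f^\star(x')=1$. Let $g\equiv 1$ and set $\mathcal{F}=\{f^\star,g\}$, so $|\mathcal{F}|=2$ and \cref{ass:realizability} holds. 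A one-line computation gives $v^\star=\int f^\star\,P_X(dx)=\tfrac12\cdot\tfrac1n+1\cdot\bigl(1-\tfrac1n\bigr)=1-\tfrac1{2n}$, hence $1-v^\star=\tfrac1{2n}<\tfrac1n$, as required.

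The main step is to isolate a ``bad'' event on which $\hat f_{\log}=g$. Let $E$ be the event that exactly one of the $n$ samples has $X_i=x$ and that this sample carries label $Y_i=1$; then $\mathbb{P}(E)=\binom n1\tfrac1n\bigl(1-\tfrac1n\bigr)^{n-1}\cdot\tfrac12=\tfrac12\bigl(1-\tfrac1n\bigr)^{n-1}$. On $E$ every observed label equals $1$ (deterministically at $x'$, by assumption at the lone $x$-sample), so $g$ attains empirical log-loss $0$---here it is essential that the convention $0\log\infty=0$ makes $\elllog(1,1)=0$---whereas $f^\star$ pays $\elllog(\tfrac12,1)=\log 2>0$ on that $x$-sample. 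Hence $g$ is the \emph{unique} empirical minimizer on $E$, so $\hat f_{\log}=g$ there, and then
\[
\sqrt{\int\bigl(f^\star(x)-\hat f_{\log}(x)\bigr)^2\,P_X(dx)}=\sqrt{\bigl(\tfrac12-1\bigr)^2\tfrac1n}=\frac1{2\sqrt n}
\]
(while its mean absolute error on $E$ is only $\bigl|\tfrac12-1\bigr|\tfrac1n=\tfrac1{2n}$, matching the narrative of a $\tfrac1n$-vs-$\tfrac1{\sqrt n}$ separation). Finally, $\mathbb{P}(E)=\tfrac12\bigl(1-\tfrac1n\bigr)^{n-1}\ge\tfrac1{2e}$ by the elementary bound $(1-1/n)^{n-1}\ge e^{-1}$ (with $0^0=1$ when $n=1$), which is exactly the advertised probability.

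I expect the only genuinely nontrivial part to be guessing this construction rather than verifying it: the choices $P_X(x)=1/n$ and $f^\star(x)=\tfrac12$ must be coupled so that $1-v^\star<\tfrac1n$ and the competitor's rMSE equals $\tfrac1{2\sqrt n}$ simultaneously, and one has to notice that inserting $g\equiv 1$ into $\mathcal{F}$ weaponizes the unboundedness of $\elllog$, so that a single well-behaved sample at $x$ already makes $g$ strictly beat the true regressor. The only delicate bookkeeping is the log-loss convention at $x'$: it is precisely $0\log\infty=0$ that keeps $g$'s empirical loss finite---indeed zero---there, without which the argument collapses.
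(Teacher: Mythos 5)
Your proposal is correct and takes essentially the same route as the paper: a two-point instance where the rare ($\tfrac1n$-probability) context has $f^\star=\tfrac12$ with Bernoulli$(\tfrac12)$ labels, the competitor takes a boundary value there and agrees with $f^\star$ on the common context, and a single rare-context sample with the favorable label (probability $\ge \tfrac{1}{2e}$ via $(1-1/n)^{n-1}\ge e^{-1}$) makes that competitor the empirical log-loss minimizer, giving rMSE exactly $\tfrac{1}{2\sqrt n}$. The paper's version is just the mirror image of yours (competitor value $0$, bad label $Y=0$, deterministic common-context label $1-\tfrac1{2n}$ instead of $1$), so the mechanism and bookkeeping are identical.
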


In the construction of \cref{prop:rmse-lb}, \cref{lem:log-loss-bound} implies that 
\[
\int \bigl|f^\star(x) - \hat{f}_{\log}(x)\bigr|P_X(dx)
\in 
O\left(\frac{1}{n}\right),
\]
yet the rMSE remains at order $\tfrac{1}{\sqrt{n}}$. Intuitively, context $x'$ appears with low probability ($1/n$), so the log-loss estimator $\hat{f}_{\log}$ might be quite noisy on $x'$ but accurate on the high-probability context $x$. Since rMSE weights the rare event $x'$ more heavily than MAE does, it overestimates the error relevant for \emph{decision making} (which focuses on probable states). The formal details can be found in \cref{appedix:lower-bounds}.

Finally, we show that this ``over-weighting of rare events'' carries over to \emph{minimizing the squared loss} directly, as commonly done by value-based RL algorithms with function approximation \citep{sutton2018reinforcement}. 

\begin{lemma}\label{lem:sq-mae-lb}
    Let $\mathcal{X} = \{x, x'\}$. For any $n \ge 1$, there exists a realizable (\cref{ass:realizability}) function class $\mathcal{F}: \mathcal{X}\to[0,1]$ with $|\mathcal{F}|=2$ and a data distribution $\mathcal{P}$ such that $1-v^\star = 1-\int f^\star\,P_X(dx) < \tfrac{1}{n}$. However, with probability at least $1/(2e)$,
    \[
    \int \bigl|\fsq(x) - f^\star(x)\bigr|P_X(dx) 
    \ge 
    \frac{1}{3\sqrt{n}}\,,
    \]
    where
    \[
    \fsq
    =
    \argmin_{f \in \mathcal{F}} 
    \sum_{i=1}^n \ellsq\bigl(f(X_i),Y_i\bigr).
    \]
\end{lemma}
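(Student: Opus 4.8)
The plan is to reuse the two-point template behind \cref{prop:rmse-lb}, but this time to build a problem on which squared-loss minimization over a two-element class is actively pushed to the \emph{wrong} hypothesis by a single atypical observation of a rare context, with the wrong hypothesis lying far from $f^\star$ on the \emph{common} context so that the error survives the $P_X$-reweighting. Fix $p = 1/n$ and set $P_X(x') = p$, $P_X(x) = 1-p$; let the conditional law of $Y$ given $X=x$ be the point mass at $1$ and that given $X=x'$ be $\mathrm{Bernoulli}(1/2)$. Then $f^\star(x) = 1$, $f^\star(x') = \tfrac12$ and $1 - v^\star = p\,(1 - \tfrac12) = \tfrac{1}{2n} < \tfrac1n$. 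Put $\mathcal{F} = \{f^\star, g\}$ with $g(x) = 1 - \Delta$, $\Delta := \tfrac{1}{2\sqrt n} \in (0, \tfrac12]$, and $g(x') = 1$; the class is realizable and has two distinct elements, and
\[
\int \bigl|g(x) - f^\star(x)\bigr|\,P_X(dx) = (1-p)\,\Delta + \frac{p}{2} = \Bigl(1 - \tfrac1n\Bigr)\frac{1}{2\sqrt n} + \frac{1}{2n} \;\ge\; \frac{1}{3\sqrt n}
\]
for every $n \ge 1$ (a one-line estimate; the slack is at least $\tfrac{1}{6\sqrt n}$). Hence it suffices to show $\fsq = g$ with probability at least $1/(2e)$.

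Next I would expand the empirical squared-loss difference as a sum of per-sample contributions and evaluate these per context. Since $Y \equiv 1$ on context $x$, every sample landing at $x$ contributes exactly $(g(x)-1)^2 - (f^\star(x)-1)^2 = \Delta^2 = \tfrac{1}{4n}$ to $\sum_i \ellsq(g(X_i),Y_i) - \sum_i \ellsq(f^\star(X_i),Y_i)$, whereas a sample landing at $x'$ with label $Y$ contributes $(1-Y)^2 - (\tfrac12 - Y)^2$, which is $-\tfrac14$ when $Y = 1$ and $+\tfrac34$ when $Y = 0$. Writing $N'$ for the number of samples at $x'$, on the event $\mathcal{E} := \{N' = 1 \text{ and the lone } x'\text{-label equals } 1\}$ the difference equals
\[
\frac{n-1}{4n} - \frac14 = -\frac{1}{4n} < 0 ,
\]
so $\sum_i \ellsq(g(X_i),Y_i) < \sum_i \ellsq(f^\star(X_i),Y_i)$ and therefore $\fsq = g$; the strict inequality also removes any tie-breaking ambiguity.

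It remains to lower-bound $\Pr[\mathcal{E}]$. The count $N'$ is $\mathrm{Binomial}(n, 1/n)$, so $\Pr[N' = 1] = (1 - 1/n)^{n-1} \ge 1/e$ (using $\ln(1-1/n) \ge -1/(n-1)$ for $n \ge 2$ and checking $n = 1$ directly), and conditionally on $N' = 1$ the single $x'$-label equals $1$ with probability $\tfrac12$, independently; hence $\Pr[\fsq = g] \ge \Pr[\mathcal{E}] \ge \tfrac1e \cdot \tfrac12 = \tfrac{1}{2e}$. Combined with the displayed MAE lower bound, this proves the lemma.

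The one genuinely delicate step is the joint calibration of the parameters: the cumulative drift toward $f^\star$ on the common context is $N\Delta^2 \le n\Delta^2 = \tfrac14$, which is precisely what one unlucky $\mathrm{Bernoulli}(1/2)$ observation at $x'$ (with $f^\star(x')=\tfrac12$, $g(x')=1$) can cancel; yet $\Delta = \tfrac{1}{2\sqrt n}$ must simultaneously be large enough that, after the $(1-p)$ reweighting, the induced MAE still exceeds $\tfrac{1}{3\sqrt n}$. Making the rare-context conditional law and the function values at $x'$ meet both demands is the crux of the construction; the only other care needed is the degenerate case $n = 1$, where $P_X(x) = 0$, which is verified by hand.
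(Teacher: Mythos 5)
Your proof is correct and follows essentially the same construction as the paper's: a rare context $x'$ with probability $1/n$ and $\mathrm{Bernoulli}(1/2)$ labels, a deterministic near-maximal label on the common context, and a two-element class where a single lucky observation at $x'$ flips the squared-loss ERM to a hypothesis displaced by $\Theta(1/\sqrt{n})$ on the common context, with the same $1/(2e)$ probability argument. The only differences are cosmetic (your wrong hypothesis matches label $1$ at $x'$ with offset $\tfrac{1}{2\sqrt{n}}$, the paper's matches label $0$ with offset $\tfrac{1}{3\sqrt{n}}$), and your handling of the $P_X$-reweighting in the MAE bound is if anything slightly more careful than the paper's.
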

The proof of \cref{lem:sq-mae-lb} can be found in \cref{appedix:lower-bounds}.
In cost-sensitive classification, \cite{foster2021efficient} showed an analogous phenomenon: the greedy policy with respect to the squared-loss minimizer can incur $\tfrac{1}{\sqrt{n}}$-sized suboptimality even though $v^\star \ge 1 - \tfrac{1}{n}$. Our lemma complements their result by demonstrating that the squared-loss minimizer itself fails to achieve a $O(1/n)$ decay in its mean absolute error, whereas the log-loss minimizer can achieve $O(1/n)$ decay in similar settings. Collectively, \cref{lem:log-loss-bound}, \cref{prop:rmse-lb}, and \cref{lem:sq-mae-lb} highlight that the MAE (and hence the log-loss objective) is more tightly coupled to $\Subopt(\hat{\pi})$ than rMSE is. Moreover, algorithms specifically designed to control the MAE---such as minimizing $\elllog$ when $v^\star \approx 1$---can adapt more effectively to problem structure than those designed around controlling rMSE (via $\ellsq$).

\section{Reparameterizing the Categorical Cross Entropy Loss}
We have seen that the log-loss can outperform the squared loss in decision-making tasks, particularly when \(v^\star\) is close to 1. A natural next step is to seek a multi-category version of log-loss that retains its ability to learn the mean with sufficient data. This can be accomplished by \emph{Reparameterizing} the categorical cross-entropy loss so that it serves as both a ``classification'' and a ``regression'' loss.

Recall that the canonical categorical cross-entropy (cat-loss) can be written as the negative log-likelihood of an exponential family \citep{brown1987fundamentals}. Let \(y \in [0,1]\) be a \emph{scalar} and \(\theta \in \mathbb{R}^K\). In its canonical form, the cat-loss can be (naively) used for value learning in reinforcement learning,
\[
\ell(\theta,y) 
=
\log\Bigl(\sum_{i=1}^{K} \exp(\theta_i)\Bigr) 
- 
T(y)^\top \theta,
\]
where \(T(y)\) ``bins'' \(y\) into one of \(K\) discrete categories. Concretely,
\[
T(y) 
=
\bigl[\,
\II\{\,0 \le y \le \nu_1\},
\II\{\nu_1 < y \le \nu_2\},
\dots,
\II\{\nu_{K-1} < y \le 1\}
\bigr]\,,
\]
where $0 < \nu_1 < \nu_2 < \dotsc < \nu_{K-1} < 1$. 
Unfortunately, this form of the cat-loss introduces an irreducible \emph{projection bias} for regression tasks since the exact location of \(y\) within the bin is lost. This bias prevents accurate value function estimation.

\paragraph{Reparameterized Cat-Loss.}
To remove this bias while retaining the multi-category structure, we reparameterize the loss to incorporate \(y\) directly into the sufficient statistic. Define
\[
\ellcat(\theta,y) 
=
\underbrace{\log \Bigl(1+\sum_{i=1}^{K-1} \exp\bigl(\nu_i\,\theta_i \bigr) + \exp(\theta_{K})\Bigr) }_{\displaystyle = A(\theta)}
-
y \,T(y)^\top \theta.
\]
Here, \(A(\theta)\) is the log-partition function, and the sufficient statistic is \(y\,T(y)\). Thus, \(\ellcat\) remains the negative log-likelihood of an exponential family, but one that \emph{does not} lose all fine-grained information about \(y\) through binning. The following proposition shows that \(\ellcat\) preserves the mean of any bounded scalar random variable.

\begin{proposition}\label{prop:cat-loss-mean}
    Let \(Y\) be a bounded random variable taking values in \([0,1]\) with mean \(\mu = \mathbb{E}[Y]\), and let \(P\) be its distribution. Then
    \[
    \theta_\star 
    \in
    \argmin_{\theta \in \mathbb{R}^K} 
    \int \ellcat(\theta, y)\,P(dy) 
    \iff
    \bigl(\nabla A(\theta_\star)\bigr)^\top \mathbf{1} 
    = 
    \mu.
    \]
\end{proposition}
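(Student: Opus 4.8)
The plan is to read $\ellcat$ as the negative log-likelihood of an exponential family with log-partition $A$, reduce optimality to a first-order stationarity condition via convexity, and then exploit the one-hot structure of $T$. First I would write the population risk as $L(\theta) := \int \ellcat(\theta,y)\,P(dy) = A(\theta) - m^\top\theta$, where $m := \mathbb{E}[Y\,T(Y)] = \int y\,T(y)\,P(dy) \in \mathbb{R}^K$; pulling the affine-in-$\theta$ term out of the integral is legitimate since $Y\in[0,1]$ and $K<\infty$. Since $A$ is convex and smooth, being a log-partition function (equivalently, a log-sum-exp over $K+1$ arguments composed with an affine map), so is $L$; therefore $\theta_\star\in\argmin_\theta L(\theta)$ if and only if $\nabla L(\theta_\star)=0$, i.e.\ if and only if $\nabla A(\theta_\star)=m$. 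Sufficiency of this stationarity condition can also be seen directly from the Bregman identity $L(\theta)-L(\theta_\star)=A(\theta)-A(\theta_\star)-\nabla A(\theta_\star)^\top(\theta-\theta_\star)\ge 0$, which parallels the KL argument used for $\elllog$ in \cref{prop:logloss-proper}.

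The crux is that $T(y)$ is one-hot for every $y\in[0,1]$: the bins $[0,\nu_1],(\nu_1,\nu_2],\dots,(\nu_{K-1},1]$ partition $[0,1]$, so exactly one coordinate of $T(y)$ equals $1$, giving $T(y)^\top\mathbf{1}=1$ and hence $m^\top\mathbf{1}=\int y\,(T(y)^\top\mathbf{1})\,P(dy)=\int y\,P(dy)=\mu$. Combined with the optimality characterization this immediately yields the forward implication: if $\theta_\star\in\argmin L$ then $\nabla A(\theta_\star)=m$, and contracting with $\mathbf{1}$ gives $(\nabla A(\theta_\star))^\top\mathbf{1}=m^\top\mathbf{1}=\mu$. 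For the converse one must show that the scalar identity $(\nabla A(\theta_\star))^\top\mathbf{1}=\mu=m^\top\mathbf{1}$ already forces the full vector identity $\nabla A(\theta_\star)=m$; I would approach this through the explicit coordinates $\partial_i A(\theta)=\nu_i e^{\nu_i\theta_i}/Z(\theta)$ for $i<K$ and $\partial_K A(\theta)=e^{\theta_K}/Z(\theta)$ (with $Z=e^{A}$ the normalizer) together with the coordinate form $m_i=\mathbb{E}[Y\,\II\{Y\in B_i\}]$, matching the two vectors bin by bin.

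I expect this reverse direction to be the main obstacle: a priori the equation $(\nabla A(\theta_\star))^\top\mathbf{1}=\mu$ fixes only one linear functional of the $K$-vector $\nabla A(\theta_\star)$, so upgrading it to the full stationarity condition requires additional structure — minimality of the exponential family, attainment of the infimum, or reading the statement as the claim that $(\nabla A)^\top\mathbf{1}$ equals $\mu$ exactly at minimizers. A secondary, more routine point is existence of a minimizer: the infimum of $L$ need not be attained when $m$ lies on the boundary of the attainable mean-parameter set (e.g.\ $P$ degenerate at a bin endpoint), which is handled either by restricting to $P$ whose induced $m$ is interior or by phrasing the result for the infimum. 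Apart from these caveats the argument is a short convexity-plus-one-hot computation mirroring \cref{prop:logloss-proper}.
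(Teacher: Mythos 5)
Your argument is essentially the paper's own proof: convexity plus first-order optimality gives $\theta_\star \in \argmin_\theta \int \ellcat(\theta,y)\,P(dy) \iff \nabla A(\theta_\star) = \int y\,T(y)\,P(dy)$, and the one-hot structure of $T$ contracted with $\mathbf{1}$ yields $\mu$, exactly as in the paper. The converse you flag as the main obstacle is a genuine issue, but it is an issue with the proposition as stated rather than with your attempt: the paper's proof likewise only establishes the equivalence with the full vector stationarity condition and then derives the scalar identity $(\nabla A(\theta_\star))^\top\mathbf{1}=\mu$ in the forward direction, declaring the claim established; since the scalar identity fixes only one linear functional of $\nabla A(\theta_\star)$, the literal ``iff'' is not proved there either (and is false in general for $K>1$), so the statement is best read as asserting that minimizers recover the mean. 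Your secondary caveat about attainment of the minimum when the induced mean parameter lies on the boundary is similarly glossed over by the paper; neither point requires you to change your forward-direction argument, which matches the intended content of the result.
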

\begin{proof}
    Since \(\ellcat\) is convex in \(\theta\), first-order optimality conditions imply
    \begin{equation*}
    \theta_\star \in \argmin_{\theta \in \mathbb{R}^K} \int \ellcat(\theta,y)\,P(dy)
    \iff
    \nabla \Bigl(\int \ellcat(\theta_\star, y)\,P(dy)\Bigr) = 0\,.   
    \end{equation*}
    Differentiating under the integral, we obtain
    \begin{equation*}
    \nabla A(\theta_\star) 
    =
    \int y\,T(y)\,P(dy).
    \end{equation*}
    Next, observe that \(T(y)\) is a one-hot bin indicator, so
    \begin{equation*}
    \Bigl(\int y\,T(y)\,P(dy)\Bigr)^\top \mathbf{1} 
    =
    \int \bigl(y\,T(y)^\top \mathbf{1}\bigr)\,P(dy)=\int y\,P(dy)
    =
    \mu\,.
    \end{equation*}
    It follows that 
    \(\bigl(\nabla A(\theta_\star)\bigr)^\top \mathbf{1} = \mu\), establishing the claim.
\end{proof}

Hence, minimizing \(\ellcat\) recovers the mean of any bounded scalar random variable \(Y\) while the structure of \(T(y)\) still allows multi-category classification. In essence, by choosing the sufficient statistic to be \(y\,T(y)\), we unify classification and regression without the loss of granularity inherent in a purely bin-based approach.

Since the cat-loss is a generalization of the log-loss, a first-order bound, similar to that of \cref{lem:log-loss-bound}, can be shown for the cat-loss. Thus the cat-loss also does well when the optimal return $v^\star \approx 1$, albeit with an additional factor of $K$ scaling the bound due to concentration arguments \citep{zhang2006varepsilon,grunwald2020fast}.

\paragraph{Extensions.}
This mean-preserving strategy can be generalized to other distribution-based losses derived from exponential families. For instance, \emph{HL Gauss} \citep{imani2018improving}, which has been shown to have very low test MAE \citep{imani2024investigating}, can be reparameterized similarly to retain its distributional modeling benefits for classification, while accurately recovering the mean of continuous targets. In a broader sense, \emph{any} exponential-family log-likelihood can be adapted for dual ``classification''-regression usage by carefully selecting sufficient statistics that embed the raw target \(y\), thus allowing us to harness the benefits of classification\footnote{Cross-entropy losses} losses (i.e., categorical cross entropy) for reinforcement learning and continue regressing \citep{farebrother2024stop}.

\section{Numerical Experiments}

We evaluate fitted Q-iteration \citep{ernst2005tree,szepesvari2022algorithms} trained with squared loss (\(\fqisq\)), log-loss (\(\fqilog\)), and cat-loss (\(\fqicat\)) on the inverted pendulum environment \citep{lagoudakis2003least,riedmiller2005neural}, where the goal is to keep an inverted pendulum balanced by applying the correct forces. The state space is two-dimensional (angle and angular velocity), and there are three actions (left, right, do nothing). The environment dynamics follow \cite{lagoudakis2003least}, with two modifications: (i) when the pendulum falls below horizontal, the state terminates, and (ii) the angular momentum is clipped to [-5,5] to facilitate the use of Fourier features \citep{konidaris2011value} of orders 2 and 3. The agent receives a reward of 0 for staying upright and -1 for falling, with a discount factor of $\gamma = 0.99$. All datasets are collected by a policy that selects actions uniformly at random until failure (which typically occurs after 6 steps). We then evaluate whether the learned policies keep the inverted pendulum above horizontal after 3000 steps and report the policy's \textit{failure rate}.

\begin{figure}[tbp]\label{fig:main}
    \centering
    \begin{subfigure}{0.32\textwidth}
        \centering
        \includegraphics[width=\linewidth]{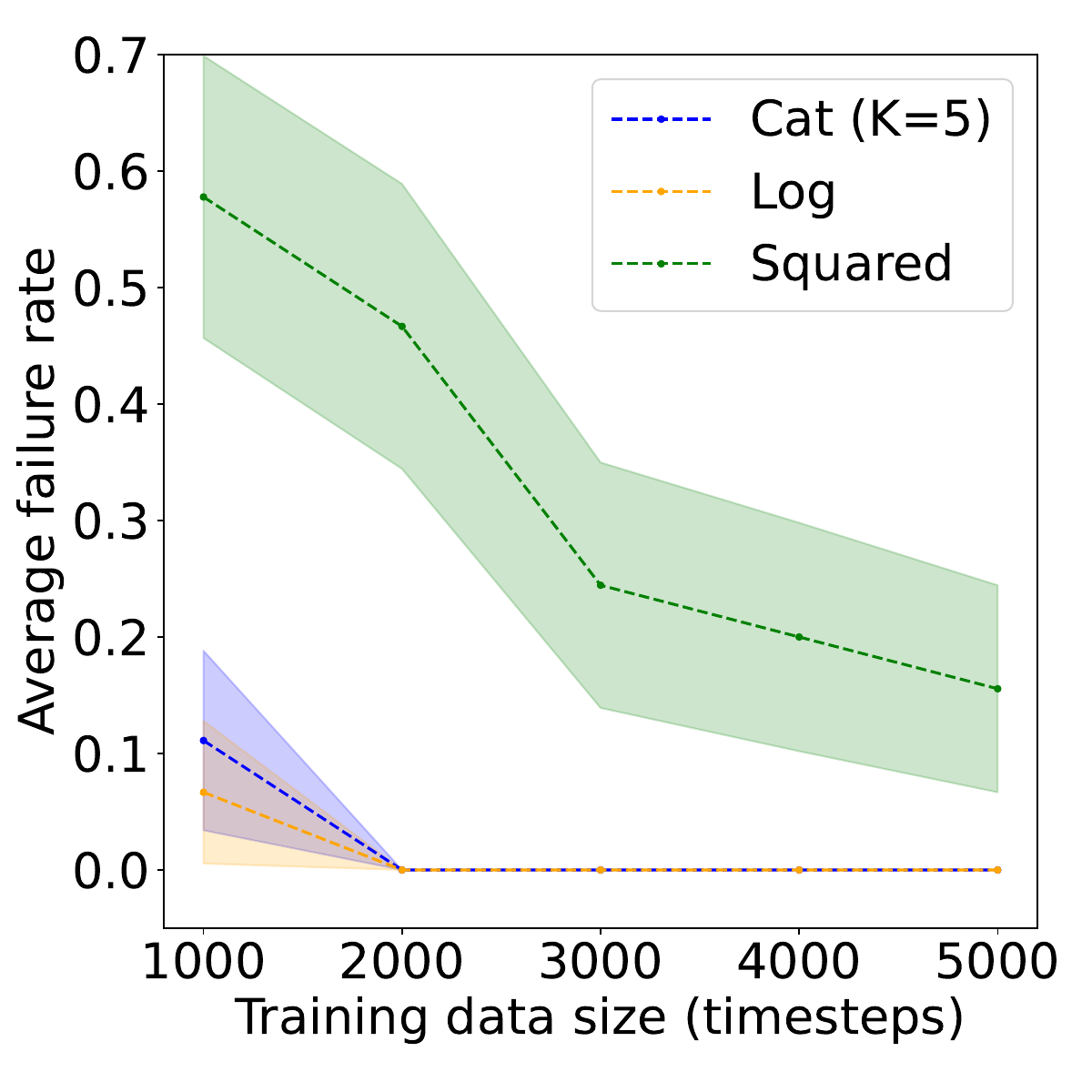}
    \end{subfigure}
    \begin{subfigure}{0.32\textwidth}
        \centering
        \includegraphics[width=\linewidth]{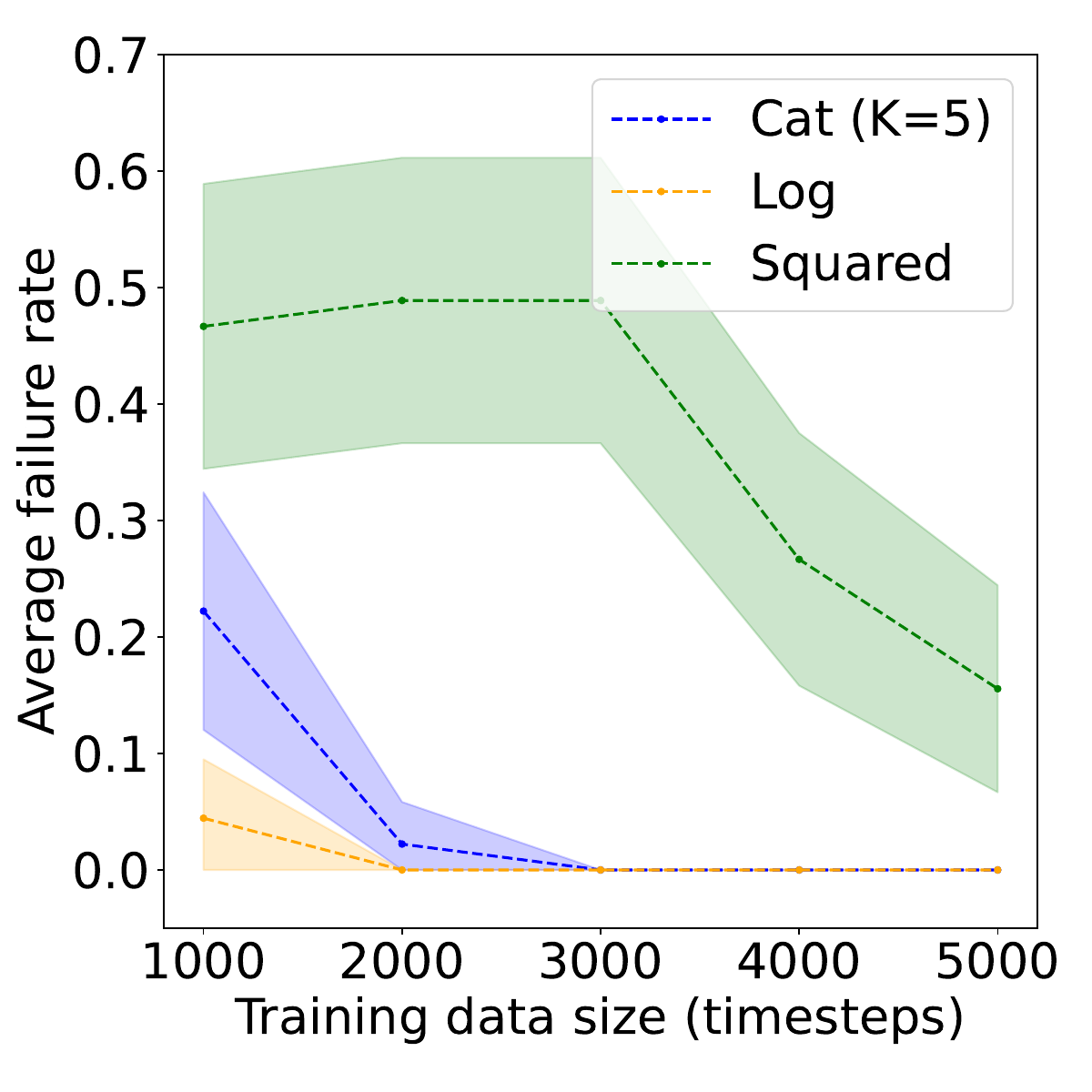}
    \end{subfigure}
    \begin{subfigure}{0.32\textwidth}
        \centering
        \includegraphics[width=\linewidth]{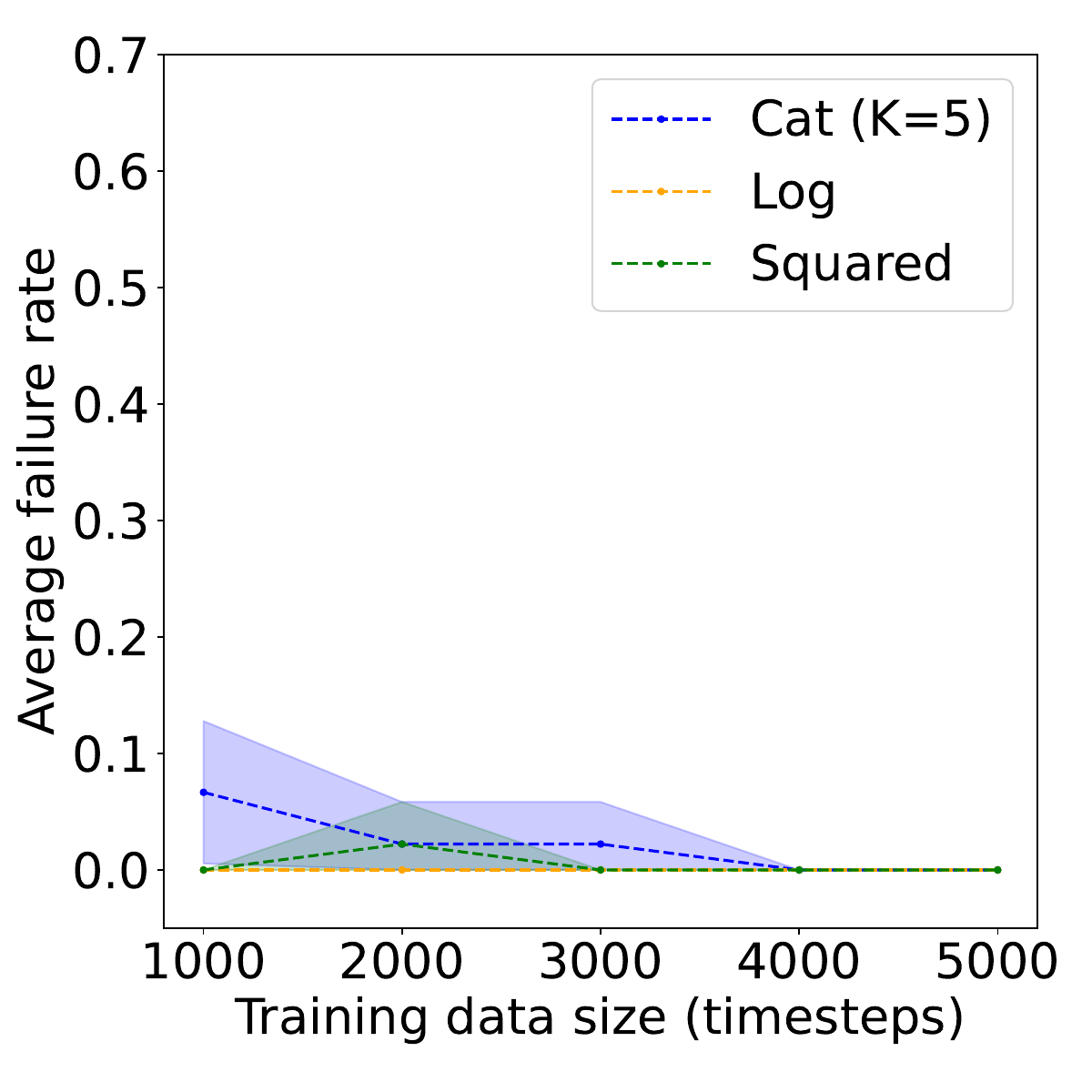}
    \end{subfigure}
    \caption{Failure rates for inverted pendulum as a function of the size of the batch dataset. Results are averaged over $45$ independently collected datasets, and fitted $Q$-iteration was run for $50$ iterations. We report $90\%$ confidence intervals via the shaded regions. The \textsc{Left} and \textsc{Middle} figures use Fourier features of order $2$, the \textsc{Right} figure uses Fourier features of order $3$. The \textsc{Left} figure uses $5$ uniformly spaced points as the support for the \fqicat, while the \textsc{Middle} and \textsc{Right} figures use $5$ non-uniformly spaced points as the support.}
    \label{fig:three_plots1}
\end{figure}

For minimizing \(\fqisq\), we use its closed-form solution; for minimizing \(\fqilog\), we apply Newton’s method \citep{sun2019generalized}; and for minimizing \(\fqicat\), we use a limited-memory BFGS method \citep{pmlr-v5-schmidt09a}. All three methods are guaranteed to converge to their respective optima superlinearly.

In this environment, there is a policy that achieves the maximum possible return. As shown in \cref{fig:three_plots1}, both \(\fqilog\) and \(\fqicat\) outperform \(\fqisq\) with order-2 Fourier features, while all three perform similarly with order-3 features, though \(\fqilog\) learns fastest. While \(\fqilog\) and \(\fqisq\) were relatively straightforward to implement, \(\fqicat\) proved more sensitive to both the choice of $\ell_2$ regularization and the spacing and number of bins.

\section{Conclusion}
In this paper, we examined how the choice of regression objective can influence the design of value-based methods in reinforcement learning. We showed that losses aligning with mean absolute error, such as log-loss and a reparameterized categorical loss, can yield stronger theoretical guarantees and better empirical outcomes than squared loss, especially when the optimal policy is near maximum return. We also presented negative examples illustrating that purely MSE-based approaches can learn slowly in such scenarios. Finally, our experiments in a linear batch reinforcement learning setting reinforce these conclusions. 

\subsubsection*{Acknowledgments}
\label{sec:ack}
This work was supported by Amii, the Canada CIFAR
AI Chairs program, and the Natural Sciences and Engineering Research Council of Canada (NSERC).

\appendix


\bibliography{main}

@book{lattimore2020bandit,
  title={Bandit algorithms},
  author={Lattimore, Tor and Szepesv{\'a}ri, Csaba},
  year={2020},
  publisher={Cambridge University Press}
}

@book{brown1987fundamentals,
Author = {Brown, Lawrence D.},
Publisher = {Institute of Mathematical Statistics},
Title = {Fundamentals of statistical exponential families : with applications in statistical decision theory.},
Year = {1987},
}

@inproceedings{imani2018improving,
  title={Improving regression performance with distributional losses},
  author={Imani, Ehsan and White, Martha},
  booktitle={International Conference on Machine Learning (ICML)},
  year={2018}
}

@article{jin2021bellman,
  title={Bellman eluder dimension: new rich classes of rl problems, and sample-efficient algorithms},
  author={Jin, Chi and Liu, Qinghua and Miryoosefi, Sobhan},
  journal={Advances in Neural Information Processing Systems (NeurIPS)},
  year={2021}
}

@inproceedings{farebrother2024stop,
  title={Stop regressing: training value functions via classification for scalable deep rl},
  author={Farebrother, Jesse and Orbay, Jordi and Vuong, Quan and Taiga, Adrien Ali and Chebotar, Yevgen and Xiao, Ted and Irpan, Alex and Levine, Sergey and Castro, Pablo Samuel and Faust, Aleksandra and others},
  booktitle={International Conference on Machine Learning},
  year={2024},
}

@inproceedings{ayoub2020model,
  title={Model-based reinforcement learning with value-targeted regression},
  author={Ayoub, Alex and Jia, Zeyu and Szepesvari, Csaba and Wang, Mengdi and Yang, Lin},
  booktitle={International Conference on Machine Learning},
  year={2020},
}

@article{jin2023provably,
  title={Provably efficient reinforcement learning with linear function approximation},
  author={Jin, Chi and Yang, Zhuoran and Wang, Zhaoran and Jordan, Michael I},
  journal={Mathematics of Operations Research},
  year={2023},
}

@book{szepesvari2022algorithms,
  title={Algorithms for reinforcement learning},
  author={Szepesv{\'a}ri, Csaba},
  year={2022},
  publisher={Springer Nature}
}

@inproceedings{chen2019information,
  title={Information-theoretic considerations in batch reinforcement learning},
  author={Chen, Jinglin and Jiang, Nan},
  booktitle={International Conference on Machine Learning},
  year={2019},
}

@article{antos2007fitted,
  title={Fitted Q-iteration in continuous action-space MDPs},
  author={Antos, Andr{\'a}s and Szepesv{\'a}ri, Csaba and Munos, R{\'e}mi},
  journal={Advances in Neural Information Processing Systems},
  year={2007}
}

@book{vapnik2013nature,
  title={The nature of statistical learning theory},
  author={Vapnik, Vladimir},
  year={2013},
  publisher={Springer Science \& Business Media}
}

@article{feldman2012agnostic,
  title={Agnostic learning of monomials by halfspaces is hard},
  author={Feldman, Vitaly and Guruswami, Venkatesan and Raghavendra, Prasad and Wu, Yi},
  journal={SIAM Journal on Computing},
  year={2012}
}

@article{imani2024investigating,
  title={Investigating the histogram loss in regression},
  author={Imani, Ehsan and Luedemann, Kai and Scholnick-Hughes, Sam and Elelimy, Esraa and White, Martha},
  journal={arXiv preprint arXiv:2402.13425},
  year={2024}
}

@inproceedings{NIPS1991_ff4d5fbb,
 author = {Vapnik, Vladimir},
 booktitle = {Advances in Neural Information Processing Systems},
 publisher = {Morgan-Kaufmann},
 title = {Principles of risk minimization for learning theory},
 year = {1991}
}

@book{sutton2018reinforcement,
  title={Reinforcement learning: an introduction},
  author={Sutton, Richard S and Barto, Andrew G},
  year={2018},
  publisher={The MIT Press}
}

@inproceedings{bellemare2017distributional,
  title={A distributional perspective on reinforcement learning},
  author={Bellemare, Marc G and Dabney, Will and Munos, R{\'e}mi},
  booktitle={International Conference on Machine Learning},
  year={2017},
}

@article{foster2021efficient,
  title={Efficient first-order contextual bandits: prediction, allocation, and triangular discrimination},
  author={Foster, Dylan J and Krishnamurthy, Akshay},
  journal={Advances in Neural Information Processing Systems},
  year={2021}
}

@article{ernst2005tree,
  title={Tree-based batch mode reinforcement learning},
  author={Ernst, Damien and Geurts, Pierre and Wehenkel, Louis},
  journal={Journal of Machine Learning Research},
  year={2005}
}

@inproceedings{riedmiller2005neural,
  title={Neural fitted Q iteration--first experiences with a data efficient neural reinforcement learning method},
  author={Riedmiller, Martin},
  booktitle={European Conference on Machine Learning},
  year={2005},
}

@inproceedings{dabney2018distributional,
  title={Distributional reinforcement learning with quantile regression},
  author={Dabney, Will and Rowland, Mark and Bellemare, Marc and Munos, R{\'e}mi},
  booktitle={AAAI Conference on Artificial Intelligence},
  year={2018}
}

@article{sun2019generalized,
  title={Generalized self-concordant functions: a recipe for Newton-type methods},
  author={Sun, Tianxiao and Tran-Dinh, Quoc},
  journal={Mathematical Programming},
  year={2019},
}

@InProceedings{pmlr-v5-schmidt09a,
  title = 	 {Optimizing costly functions with simple constraints: a limited-memory projected quasi-Newton algorithm},
  author = 	 {Schmidt, Mark and Berg, Ewout and Friedlander, Michael and Murphy, Kevin},
  booktitle = 	 {International Conference on Artificial Intelligence and Statistics},
  year = 	 {2009}
}

@inproceedings{konidaris2011value,
  title={Value function approximation in reinforcement learning using the Fourier basis},
  author={Konidaris, George and Osentoski, Sarah and Thomas, Philip},
  booktitle={AAAI Conference on Artificial Intelligence},
  year={2011}
}

@article{lagoudakis2003least,
  title={Least-squares policy iteration},
  author={Lagoudakis, Michail G and Parr, Ronald},
  journal={Journal of Machine Learning Research},
  year={2003}
}

@inproceedings{lyle2019comparative,
  title={A comparative analysis of expected and distributional reinforcement learning},
  author={Lyle, Clare and Bellemare, Marc G and Castro, Pablo Samuel},
  booktitle={AAAI Conference on Artificial Intelligence},
  year={2019}
}

@inproceedings{elkan2001foundations,
  title={The foundations of cost-sensitive learning},
  author={Elkan, Charles},
  booktitle={International Joint Conference on Artificial Intelligence},
  year={2001},
}

@article{Mnih2015HumanlevelCT,
  title={Human-level control through deep reinforcement learning},
  author={Volodymyr Mnih and Koray Kavukcuoglu and David Silver and Andrei A. Rusu and Joel Veness and Marc G. Bellemare and Alex Graves and Martin A. Riedmiller and Andreas Kirkeby Fidjeland and Georg Ostrovski and Stig Petersen and Charlie Beattie and Amir Sadik and Ioannis Antonoglou and Helen King and Dharshan Kumaran and Daan Wierstra and Shane Legg and Demis Hassabis},
  journal={Nature},
  year={2015}
}

@article{ben2003difficulty,
  title={On the difficulty of approximately maximizing agreements},
  author={Ben-David, Shai and Eiron, Nadav and Long, Philip M},
  journal={Journal of Computer and System Sciences},
  year={2003}
}

@article{zhang2006varepsilon,
  title={From $\varepsilon$-entropy to KL-entropy: analysis of minimum information complexity density estimation},
  author={Zhang, Tong},
  journal={The Annals of Statistics},
  year={2006},
}

@article{grunwald2020fast,
  title={Fast rates for general unbounded loss functions: from ERM to generalized Bayes},
  author={Gr{\"u}nwald, Peter D and Mehta, Nishant A},
  journal={Journal of Machine Learning Research},
  year={2020}
}

@inproceedings{wang2024more,
  title={More benefits of being distributional: second-order bounds for reinforcement learning},
  author={Wang, Kaiwen and Oertell, Owen and Agarwal, Alekh and Kallus, Nathan and Sun, Wen},
  booktitle={International Conference on Machine Learning},
  year={2024},
}

@inproceedings{ayoub2024switching,
      title={Switching the Loss Reduces the Cost in Batch Reinforcement Learning}, 
      author={Alex Ayoub and Kaiwen Wang and Vincent Liu and Samuel Robertson and James McInerney and Dawen Liang and Nathan Kallus and Csaba Szepesvári},
      year={2024},
        booktitle={International Conference on Machine Learning}
}
\bibliographystyle{rlj}

\beginSupplementaryMaterials

\section{Additional Notation}
In this section, we introduce additional notation we will find useful in stating our theoretical results. For a distribution $P$ over the reals, let $\EE(P)$ denote the mean of $P$. Furthermore for $p,q \in [0,1]$ define the pointwise triangular deviation between $p,q$ as 
$$
\Delta(p,q) = \frac{(p-q)^2}{p+q}
$$
and the binary Hellinger distance of $p$ and $q$ as
$$
\hell(p,q) = \frac{1}{2}(\sqrt{p}-\sqrt{q})^2 + \frac{1}{2}(\sqrt{1-p}-\sqrt{1-q})^2\,.
$$
Furthermore for any $x \in [0,1]$, define 
$$
L(x) = 1-x\,.
$$ 
\section{Proof of \cref{lem:log-loss-bound}}\label{appendix:proofs}
We begin by bounding the suboptimality gap of the value-based method that minimizes the log-loss \(\elllog\).

\begin{proof}[Proof of \cref{lem:log-loss-bound}]
    By \cref{lem:dylan}, we have
    \begin{align*}
        \Subopt(\hat{\pi})
        &= v^\star - \hat{v}
        =
        L(\hat{v}) - L(v^\star) \\
        &\leq 
        8 \sqrt{\,L(v^\star)\,\mathbb{E}\!\Bigl[\sum_{a\in\mcA}\Delta\!\bigl(L(f^\star(S,a)),\,L(\hat{f}(S,a))\bigr)\Bigr]} 
        + 
        17\, \mathbb{E}\!\Bigl[\sum_{a \in \mcA}\Delta\!\bigl(L(f^\star(S,a)),\,L(\hat{f}(S,a))\bigr)\Bigr].
    \end{align*}
    Next, by \cref{lem:tri-to-hell},
    \[
    \Delta\!\bigl(L(f^\star(S,a)),\,L(\hat{f}(S,a))\bigr) 
    \leq
    4\,\hell\!\bigl(f^\star(S,a),\,\hat{f}(S,a)\bigr).
    \]
    Applying \cref{thm:concentration}, we get
    \[
    \mathbb{E}\,\Bigl[\sum_{a \in \mcA}
    \Delta\!\bigl(L(f^\star(S,a)),\,L(\hat{f}(S,a))\bigr)\Bigr] 
    \leq 
    \frac{2A\,\log\bigl(|\mcF|/\delta\bigr)}{n}.
    \]
    The second part of the lemma follows from the argument in the proof of Lemma 1 in \cite{foster2021efficient}, where bounding the mean absolute error appears as an intermediate step.
\end{proof}

\section{Proof of the Negative Results}\label{appedix:lower-bounds}

In this section, we show that minimizing the empirical squared loss does not always achieve a \(1/n\)-rate for the mean absolute error (MAE) when \(\int f^\star(x)\,P_X(dx)\approx 1\). By contrast, \cref{lem:log-loss-bound} implies that minimizing the empirical log-loss \emph{does} achieve such a rate under the same conditions.

\medskip

\noindent
\textbf{Setup.}
Given a dataset \(\{(X_i,Y_i)\}_{i=1}^n \sim \mcP^{\otimes n}\) with \(Y_i \in [0,1]\) and \(X_i \in \mcX\), define the empirical squared loss of a function \(f: \mcX \rightarrow [0,1]\) by
\[
\hat{L}(f) 
= 
\sum_{i=1}^n \bigl(f(X_i) - Y_i\bigr)^2.
\]
Let \(P_X\) denote the distribution of the contexts. Given a fixed function class \(\mcF \subseteq [0,1]^\mcX\), define the empirical risk minimizer (ERM) for squared loss as
\[
\fsq 
=
\argmin_{f \in \mcF} \hat{L}(f).
\]
We now construct a problem instance where \(\fsq\) does not achieve \(1/n\)-rate convergence for MAE, even though \(f^\star \approx 1\). Recall that by \cref{lem:log-loss-bound}, the empirical log-loss minimizer \emph{can} achieve a \(1/n\)-rate under similar conditions.

\paragraph{Construction.}
Let \(\mcX = \{x,x'\}\) and set \(P_X(x) = 1 - 1/n\) and \(P_X(x') = 1/n\). The labels \(Y\) have the following conditional distributions:
\begin{enumerate}
    \item \(Y \mid x = 1 - \tfrac{1}{2n}\) almost surely, so \(\mathbb{E}[Y \mid X = x] = f^\star(x) = 1 - \tfrac{1}{2n}\).
    \item \(Y \mid x' \sim \mathrm{Bernoulli}(1/2)\), so \(\mathbb{E}[Y \mid X = x'] = f^\star(x') = 1/2\).
\end{enumerate}
We take the function class \(\mcF = \{f^\star,\psi\}\), where
\[
\psi(x) = 1 - \frac{1}{2n} - \frac{1}{3\sqrt{n}},
\quad \text{and} \quad
\psi(x') = 0.
\]
This class satisfies the realizability assumption (i.e., \(f^\star \in \mcF\)).

\begin{proof}[Proof of \cref{lem:sq-mae-lb}]
    Suppose \(\hat{L}(\psi) \leq \hat{L}(f^\star)\). Then \(\fsq = \psi\) and thus
    \[
    \int \bigl|\fsq(x) - f^\star(x)\bigr|P_X(dx) 
    \ge 
    \Bigl|\,\Bigl(1-\frac{1}{2n}-\frac{1}{3\sqrt{n}}\Bigr) - \Bigl(1-\frac{1}{2n}\Bigr)\Bigr|
    = 
    \frac{1}{3\sqrt{n}}.
    \]
    It remains to show that \(\hat{L}(\psi) \leq \hat{L}(f^\star)\) holds with constant probability. Let \(N_2\) be the number of times \(X_i = x'\) in the dataset. Then
    \[
    \mathbb{P}(N_2 = 1) 
    =
    \sum_{i=1}^n \frac{1}{n}\Bigl(1-\frac{1}{n}\Bigr)^{n-1} 
    =
    \frac{1}{\,1 - \tfrac{1}{n}\,} \Bigl(1-\tfrac{1}{n}\Bigr)^{n-1}
    \ge 
    \frac{1}{e},
    \]
    for all \(n \ge 1\). Conditioning on the event \(N_2=1\), we have exactly one observation of \(x'\). Since \(Y=0\) in that observation with probability \(1/2\), it follows that with probability at least \(1/(2e)\) we observe a single \((x',0)\) point in the dataset. On this event,
    \[
    \hat{L}(f^\star) - \hat{L}(\psi) 
    = 
    \frac{1}{4} - (n-1)\Bigl(\frac{1}{3\sqrt{n}}\Bigr)^2 
    \ge
    \frac{1}{4} - \frac{1}{9}
    > 
    0.
    \]
    Hence \(\fsq = \psi\) with probability at least \(1/(2e)\) for all \(n\ge1\).
\end{proof}

\paragraph{Proof of \cref{prop:rmse-lb}.}
We use a similar construction to prove that the root mean squared error (rMSE) of \(\hat{f}_{\log}\) can remain at \(\Omega(1/\sqrt{n})\) even when \(\int f^\star(x)\,P_X(dx) \approx 1\). We slightly modify the function class to
\[
\mcF_{\log} 
=
\{\,f^\star,\,\phi\},
\quad
\text{where }
\phi(x) = f^\star(x),
\phi(x') = 0.
\]
Define
\[
\hat{L}_{\log}(f)
=
\sum_{i=1}^n \elllog\bigl(f(X_i),Y_i\bigr),
\quad
\hat{f}_{\log}
=
\argmin_{f \in \mcF_{\log}} \hat{L}_{\log}(f).
\]
If \(\hat{L}_{\log}(\phi) \le \hat{L}_{\log}(f^\star)\), then \(\hat{f}_{\log} = \phi\), and
\[
\sqrt{
\int \bigl(\hat{f}_{\log}(x) - f^\star(x)\bigr)^2\,P_X(dx)
}
=
\frac{1}{2\sqrt{n}}.
\]
Let \(N_2\) be the number of times \(X_i = x'\). As in the proof of \cref{lem:sq-mae-lb},
\[
\mathbb{P}(N_2 = 1)
\ge
\frac{1}{e}.
\]
Conditioning on \(N_2 = 1\), we have \(Y=0\) at \(x'\) with probability \(1/2\), which occurs with probability \(1/(2e)\). On this event,
\[
\hat{L}_{\log}(f^\star)
-
\hat{L}_{\log}(\phi)
=
\log \frac{1}{1 - 0.5}
-
\log \frac{1}{1}
=
\log(2)
>
0,
\]
so \(\hat{f}_{\log} = \phi\). Hence with probability at least \(1/(2e)\), the rMSE between \(\hat{f}_{\log}\) and \(f^\star\) remains \(\tfrac{1}{2\sqrt{n}}\).

\medskip

Thus, we have shown that there are problems where \(v^\star \approx 1\) but the rMSE of \(\hat{f}_{\log}\) decays no faster than \(1/\sqrt{n}\). 
\section{Technical Results}
\begin{lemma}[Lemma 1 of \cite{foster2021efficient}]\label{lem:dylan}
For any function $f : \mcS \times \mcA \rightarrow [0,1]$ and policy $\pi$ that is greedy with respect to $f$,
\begin{align*}
    \MoveEqLeft L(v^\pi) - L(v^\star)\\
    &\leq 8 \sqrt{L(v^\star)\int\,\sum_{a\in\mcA}\Delta\big(L^\star (s,a),\hat L(s,a)\big)\,P_S(ds)} + 17 \int\,\sum_{a \in \mcA}\Delta\big(L^\star (s,a),\hat L(s,a)\big)\,P_S(ds)
\end{align*}
where $L^\star(s,a) = L(f^\star(s,a))$ and $\hat L(s,a) = L(\hat f(s,a))$.
\end{lemma}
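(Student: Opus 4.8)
The plan is to follow the same policy-space-to-function-space reduction already used in \eqref{eqn:class}, and then convert each resulting mean-absolute-error term into a triangular-discrimination term, paying a multiplicative ``mass'' factor that turns out to be controlled by $L(v^\star)$ up to lower-order corrections; the stated bound then follows from a self-bounding (fixed-point) argument.

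First I would use that $\pi$ is greedy with respect to $\hat f$ to telescope exactly as in \eqref{eqn:class}:
\begin{align*}
L(v^\pi) - L(v^\star)
&= v^\star - v^\pi \\
&\le \int \bigl|f^\star(s,\pi^\star) - \hat f(s,\pi^\star)\bigr|\,P_S(ds)
+ \int \bigl|\hat f(s,\pi) - f^\star(s,\pi)\bigr|\,P_S(ds)
=: T_1 + T_2 .
\end{align*}
Next I would invoke the pointwise identity, valid for any $p,q\in[0,1]$ with $L(p)=1-p$, $L(q)=1-q$,
\[
|p-q| \;=\; |L(p)-L(q)| \;=\; \sqrt{\Delta\bigl(L(p),L(q)\bigr)}\;\sqrt{L(p)+L(q)} ,
\]
which is merely $\Delta(a,b)=(a-b)^2/(a+b)$ rearranged. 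Substituting this into $T_1$ and $T_2$ and applying Cauchy--Schwarz gives, for $j\in\{1,2\}$,
\[
T_j \;\le\; \sqrt{\int \Delta\bigl(L^\star(s,a_j),\hat L(s,a_j)\bigr)\,P_S(ds)}\;\cdot\;\sqrt{\int \bigl(L^\star(s,a_j)+\hat L(s,a_j)\bigr)\,P_S(ds)} ,
\]
where $a_1=\pi^\star(s)$ and $a_2=\pi(s)$. Since every summand of $\sum_{a\in\mcA}\Delta(\cdot)$ is nonnegative, the first factor is at most $\sqrt{\varepsilon}$, where $\varepsilon := \mathbb{E}\bigl[\sum_{a\in\mcA}\Delta(L^\star(S,a),\hat L(S,a))\bigr]$ is the quantity in the lemma's conclusion.

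Then I would bound the ``mass'' factors. For the $\pi^\star$-branch, $\int L^\star(s,\pi^\star)\,P_S(ds)=1-v^\star=L(v^\star)$ and, by the triangle inequality, $\int \hat L(s,\pi^\star)\,P_S(ds)\le L(v^\star)+T_1$. For the $\pi$-branch, $\int L^\star(s,\pi)\,P_S(ds)=1-v^\pi=L(v^\pi)=L(v^\star)+\Subopt(\pi)\le L(v^\star)+T_1+T_2$, and similarly $\int \hat L(s,\pi)\,P_S(ds)\le L(v^\pi)+T_2$. Plugging these in yields the self-bounding system $T_1\le \sqrt{\varepsilon}\,\sqrt{2L(v^\star)+T_1}$ and $T_2\le\sqrt{\varepsilon}\,\sqrt{2L(v^\star)+2T_1+3T_2}$. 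Solving the first via the quadratic formula together with $\sqrt{a+b}\le\sqrt a+\sqrt b$ gives $T_1\le \sqrt{2L(v^\star)\varepsilon}+\varepsilon$; feeding this into the second and repeatedly using $\sqrt{a+b}\le\sqrt a+\sqrt b$ and AM--GM (with suitably rescaled arguments, to absorb the residual $T_2$ on the right) gives $T_2\le c_1\sqrt{L(v^\star)\varepsilon}+c_2\varepsilon$; adding the two bounds and loosening the constants yields $\Subopt(\pi)=L(v^\pi)-L(v^\star)\le T_1+T_2\le 8\sqrt{L(v^\star)\varepsilon}+17\varepsilon$.

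The hard part will be this last self-bounding step: the mass factor of the $\pi$-branch contains $L(v^\pi)$ itself, so the inequalities must be closed by a fixed-point computation, and extracting the explicit constants $8$ and $17$ requires choosing the AM--GM splits with some care — conceptually routine, but the only place needing real attention. Two minor technicalities remain to be dispatched: the degenerate case $L(v^\star)=0$ (then $f^\star\equiv 1$, forcing $T_1=0$, and with the convention $\Delta(0,0)=0$ all terms vanish), and the purely notational identification of the hypothesis's $f$ with the conclusion's $\hat f$.
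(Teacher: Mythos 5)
Your proposal is correct, but there is nothing in the paper to compare it against: the paper does not prove \cref{lem:dylan} at all, it imports it verbatim as Lemma~1 of \citet{foster2021efficient} and uses it as a black box. What you give is a self-contained reconstruction, and it checks out. The greedy decomposition is exactly the one in \eqref{eqn:class}; the pointwise identity $|p-q|=\sqrt{\Delta(L(p),L(q))}\sqrt{L(p)+L(q)}$ (with $\Delta(0,0)=0$) plus Cauchy--Schwarz gives $T_j\le\sqrt{\varepsilon}\cdot\sqrt{\mathrm{mass}_j}$ as you claim; the mass bounds $\int L^\star(s,\pi^\star)\,P_S(ds)=L(v^\star)$, $\int \hat L(s,\pi^\star)\,P_S(ds)\le L(v^\star)+T_1$, $\int L^\star(s,\pi)\,P_S(ds)=L(v^\pi)\le L(v^\star)+T_1+T_2$ and $\int \hat L(s,\pi)\,P_S(ds)\le L(v^\pi)+T_2$ are all valid (the use of $\Subopt(\pi)\le T_1+T_2$ here is not circular, since that step relies only on greediness); and the deferred constant chasing does close with room to spare: the quadratic formula gives $T_1\le\sqrt{2L(v^\star)\varepsilon}+\varepsilon$, and applying it again (or your $\lambda=1$ AM--GM absorption) to $T_2^2\le\varepsilon\,(2L(v^\star)+2T_1+3T_2)$ yields $T_1+T_2\le c_1\sqrt{L(v^\star)\varepsilon}+c_2\varepsilon$ with $c_1\approx 6.3$, $c_2\approx 8.3$, comfortably inside the stated $8$ and $17$. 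Be aware, though, that this triangular-discrimination self-bounding argument is essentially the strategy of \citet{foster2021efficient} themselves, so you are not on a genuinely new route relative to the original source; what your write-up adds over the paper is self-containedness, making explicit where greediness enters and why the first-order factor $L(v^\star)$ appears, namely through the mass terms $\int\bigl(L^\star+\hat L\bigr)\,P_S(ds)$.
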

\begin{lemma}[Lemma A.1 of \cite{ayoub2024switching}]\label{lem:tri-to-hell}
    For all $p,q \in [0,1]$, we have
    \begin{equation*}
        \frac{1}{4}\Delta(p,q) \leq \frac{1}{2}(\sqrt{p}-\sqrt{q})^2 \leq \hell(p,q)\,.
    \end{equation*}
\end{lemma}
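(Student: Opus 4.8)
The plan is to treat the two inequalities separately, since they rest on entirely different (and elementary) facts. The right-hand bound $\frac{1}{2}(\sqrt{p}-\sqrt{q})^2 \le \hell(p,q)$ is immediate from the definition: unfolding $\hell(p,q) = \frac{1}{2}(\sqrt{p}-\sqrt{q})^2 + \frac{1}{2}(\sqrt{1-p}-\sqrt{1-q})^2$, the second summand is a square and hence nonnegative, so dropping it yields the claim. Equality holds exactly when $\sqrt{1-p}=\sqrt{1-q}$, i.e.\ $p=q$.

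The left-hand bound $\frac{1}{4}\Delta(p,q) \le \frac{1}{2}(\sqrt{p}-\sqrt{q})^2$ is the substantive one, and the key observation is the factorization $p - q = (\sqrt{p}-\sqrt{q})(\sqrt{p}+\sqrt{q})$, valid for $p,q \ge 0$. Squaring gives $(p-q)^2 = (\sqrt{p}-\sqrt{q})^2(\sqrt{p}+\sqrt{q})^2$, so that $\Delta(p,q) = (\sqrt{p}-\sqrt{q})^2 \cdot \tfrac{(\sqrt{p}+\sqrt{q})^2}{p+q}$ whenever $p+q>0$. When $p\ne q$ the factor $(\sqrt{p}-\sqrt{q})^2$ is strictly positive and cancels, reducing the target inequality to $\tfrac{(\sqrt{p}+\sqrt{q})^2}{p+q} \le 2$. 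Expanding $(\sqrt{p}+\sqrt{q})^2 = (p+q) + 2\sqrt{pq}$, this is precisely $2\sqrt{pq} \le p+q$, i.e.\ the AM--GM inequality, which itself follows from $(\sqrt{p}-\sqrt{q})^2 \ge 0$. So the entire argument collapses to one factorization followed by AM--GM.

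Finally I would dispatch the degenerate cases. When $p=q$, both $\Delta(p,q)$ and $(\sqrt{p}-\sqrt{q})^2$ vanish, so the left inequality holds trivially and the cancellation step is not needed. When $p=q=0$ the expression $\Delta(p,q)=\tfrac{(p-q)^2}{p+q}$ is a $0/0$ indeterminate, so I would adopt the natural convention $\Delta(0,0)=0$ (consistent with $\Delta$ being a divergence), under which both sides are again zero. The only mild subtlety to flag is this boundary behaviour of $\Delta$ together with the need to restrict the cancellation to $p \ne q$; beyond that there is no genuine obstacle, as the proof is a short chain of algebraic identities plus AM--GM.
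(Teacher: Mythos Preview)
Your proof is correct. The paper does not actually prove this lemma; it merely states it with a citation to \cite{ayoub2024switching}, so there is no in-paper argument to compare against. Your elementary approach---dropping the nonnegative $\tfrac{1}{2}(\sqrt{1-p}-\sqrt{1-q})^2$ term for the right inequality, and using the factorization $(p-q)^2=(\sqrt{p}-\sqrt{q})^2(\sqrt{p}+\sqrt{q})^2$ together with AM--GM for the left---is the standard route and is fully rigorous, including your handling of the degenerate cases $p=q$ and $p=q=0$.
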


\subsection{Concentration for the Log-Loss Estimator}
Fix a context set \(\mcX\). Let \(\{(X_i,Y_i)\}_{i=1}^n\) be i.i.d.\ samples from a distribution \(\nu \in \mathcal{M}_1(\mcX\times[0,1])\). Define the regression function 
\[
f^\star(x) = \EE\bigl[Y_1 \,\big|X_1 = x\bigr].
\]
Suppose we have a finite class of candidate functions \(\mcF \subseteq [0,1]^{\mcX}\). Recall that the log-loss estimator is given by
\[
\hat{f}_{\log} 
=
\argmin_{f\in \mcF}
\sum_{i=1}^n
\elllog\bigl(f(X_i),\,Y_i\bigr),
\]
where for \(x,y\in [0,1]\),
\[
\elllog(x,y) 
= 
y \,\log \!\frac{1}{x}
+
(1-y)\,\log \!\frac{1}{1-x},
\]
with the convention \(0 \log\infty = \lim_{u\to 0}\,u\log\frac{1}{u} = 0\).

\citet{foster2021efficient} establish the following concentration result for \(\hat{f}_{\log}\). We restate it here for completeness.

\begin{theorem}\label{thm:concentration}
    Suppose \(f^\star \in \mcF\). Let \(D_n = \{(X_i,Y_i)\}_{i=1}^n \sim \nu^{\otimes n}\). Then for any \(\delta\in(0,1)\), with probability at least \(1-\delta\),
    \[
    \int \hell\!\bigl(\hat{f}_{\log}(x),\,f^\star(x)\bigr)\,\nu_X(dx)
    \le
    \frac{2\,\log\bigl(|\mcF|/\delta\bigr)}{n},
    \]
    where \(\nu_X\) is the marginal distribution of \(X_1,\dots,X_n\).
\end{theorem}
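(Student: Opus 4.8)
The plan is to run the classical maximum-likelihood aggregation argument, viewing the log-loss as a negative log-likelihood. Write $p_f(y\mid x) := f(x)^{y}(1-f(x))^{1-y}$ (interpreting the boundary cases via the $0\log\infty=0$ convention of \cref{prop:logloss-proper}), so that $\elllog(f(x),y)=-\log p_f(y\mid x)$. First I would use optimality of $\hat{f}_{\log}$: since it minimizes $\sum_{i=1}^n\elllog(f(X_i),Y_i)$ over $\mcF$ and $f^\star\in\mcF$, we get $\sum_{i=1}^n\log\frac{p_{\hat{f}_{\log}}(Y_i\mid X_i)}{p_{f^\star}(Y_i\mid X_i)}\ge 0$, equivalently $\prod_{i=1}^n\sqrt{p_{\hat{f}_{\log}}(Y_i\mid X_i)/p_{f^\star}(Y_i\mid X_i)}\ge 1$. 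Hence it suffices to prove a high-probability upper bound on $\sup_{f\in\mcF}\prod_{i=1}^n\sqrt{p_f(Y_i\mid X_i)/p_{f^\star}(Y_i\mid X_i)}$ that decays with $h_f:=\int\hell(f(x),f^\star(x))\,\nu_X(dx)$, and then read it off at $f=\hat{f}_{\log}$.

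The core step is a one-sample affinity bound: for every fixed $f\in\mcF$,
\[
\mathbb{E}\!\left[\sqrt{p_f(Y_1\mid X_1)/p_{f^\star}(Y_1\mid X_1)}\right]\;\le\;1-h_f.
\]
To prove this I would condition on $X_1=x$ and set $\phi(y):=\sqrt{p_f(y\mid x)/p_{f^\star}(y\mid x)}=\bigl(\tfrac{f(x)}{f^\star(x)}\bigr)^{y/2}\bigl(\tfrac{1-f(x)}{1-f^\star(x)}\bigr)^{(1-y)/2}$, which has the form $e^{\alpha y+\beta}$ and is therefore convex on $[0,1]$, so $\phi(y)\le(1-y)\phi(0)+y\phi(1)$. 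Taking the conditional expectation given $X_1=x$ and using $\mathbb{E}[Y_1\mid X_1=x]=f^\star(x)$ gives $\mathbb{E}[\phi(Y_1)\mid X_1=x]\le(1-f^\star(x))\phi(0)+f^\star(x)\phi(1)=\sqrt{f(x)f^\star(x)}+\sqrt{(1-f(x))(1-f^\star(x))}=1-\hell(f(x),f^\star(x))$; averaging over $x\sim\nu_X$ finishes the display. I expect this to be the step that needs the most care: in the usual Bernoulli MLE analysis the analogous quantity is an \emph{identity}, whereas here $Y$ need only match $f^\star$ in conditional mean, so one genuinely needs this ``binarization''/convexity argument (equivalently: replacing $Y\mid x$ by $\mathrm{Bernoulli}(f^\star(x))$ can only increase the Hellinger affinity).

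Given the one-sample bound, independence of the $(X_i,Y_i)$ yields $\mathbb{E}\bigl[\prod_{i=1}^n\sqrt{p_f(Y_i\mid X_i)/p_{f^\star}(Y_i\mid X_i)}\bigr]\le(1-h_f)^n\le e^{-n h_f}$. By Markov's inequality, for each fixed $f$ the product exceeds $\tfrac{|\mcF|}{\delta}e^{-n h_f}$ with probability at most $\delta/|\mcF|$, and a union bound over the finite class $\mcF$ makes this hold simultaneously over all $f\in\mcF$ with probability at least $1-\delta$. On that event, taking $f=\hat{f}_{\log}$ and combining with $\prod_{i=1}^n\sqrt{p_{\hat{f}_{\log}}(Y_i\mid X_i)/p_{f^\star}(Y_i\mid X_i)}\ge1$ from the first step gives $1\le\tfrac{|\mcF|}{\delta}e^{-n h_{\hat{f}_{\log}}}$, i.e.\ $\int\hell(\hat{f}_{\log}(x),f^\star(x))\,\nu_X(dx)=h_{\hat{f}_{\log}}\le\tfrac{1}{n}\log(|\mcF|/\delta)\le\tfrac{2}{n}\log(|\mcF|/\delta)$, which is the claim (in fact with a factor-of-two margin). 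The only remaining loose ends are the degenerate cases $f(x)\in\{0,1\}$ or $f^\star(x)\in\{0,1\}$, which are absorbed by the same boundary conventions used for \cref{prop:logloss-proper}.
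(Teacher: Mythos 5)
Your proposal is correct, but it is genuinely different from what the paper does: the paper does not prove \cref{thm:concentration} at all, it simply cites the last display on page~24 of the arXiv version of \citet{foster2021efficient} with $A=1$. What you wrote out is, in effect, the standard maximum-likelihood aggregation argument that underlies that cited result, specialized to the i.i.d.\ batch setting: optimality of $\hat f_{\log}$ gives $\prod_i\sqrt{p_{\hat f_{\log}}/p_{f^\star}}\ge 1$; the one-sample Hellinger-affinity bound $\mathbb{E}[\sqrt{p_f/p_{f^\star}}]\le 1-h_f$ follows from your binarization step, which is indeed the only delicate point since $Y\mid x$ is merely mean-matched to $f^\star(x)$ rather than Bernoulli, and your convexity argument ($\phi(y)=e^{\alpha y+\beta}$, so $\phi(y)\le(1-y)\phi(0)+y\phi(1)$, then $(1-f^\star)\phi(0)+f^\star\phi(1)=\sqrt{f f^\star}+\sqrt{(1-f)(1-f^\star)}=1-\hell(f,f^\star)$) handles it correctly; independence, Markov, and a union bound over the finite class then give $h_{\hat f_{\log}}\le\frac{1}{n}\log(|\mcF|/\delta)$, which implies the stated bound with room to spare. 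The trade-off is clear: the paper's route is a one-line deferral to a result proved in a more general (sequential, $A$-action) setting, which is where its factor of $2$ comes from, while your route yields a self-contained proof with the sharper constant $1$, at the cost of being tied to the i.i.d.\ setup used here (which is all the theorem claims). Your handling of the boundary cases is also fine: whenever $f^\star(x)\in\{0,1\}$ the label is almost surely degenerate, so $p_{f^\star}(Y_i\mid X_i)>0$ almost surely and the likelihood ratios are well defined, and any $f$ with $p_f(Y_i\mid X_i)=0$ just contributes $\phi=0$.
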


\begin{proof}
    The result follows directly from the last equation on page 24 of the arXiv version of \citet{foster2021efficient}, taking \(A=1\). \qedhere
\end{proof}

\end{document}